\newcommand{\graphbloom}{{\it GraphBloom}\xspace}
\newcommand{\rpm}{\sbox0{$1$}\sbox2{$\scriptstyle\pm$}
  \raise\dimexpr(\ht0-\ht2)/2\relax\box2 }
\title{Graph DNA: Deep Neighborhood Aware \\
Graph Encoding for Collaborative Filtering}
\author{%
  Liwei Wu \\
  Department of Statistics\\
  University of California, Davis\\
  Davis, CA 95616 \\
  \texttt{liwu@ucdavis.edu} \\
  \And
  Hsiang-Fu Yu\\
  Amazon\\
  Palo Alto, CA 94301 \\
  \texttt{rofu.yu@gmail.com} \\
  \And
  Nikhil Rao\\
  Amazon\\
  Palo Alto, CA 94301 \\
  \texttt{nikhilrao86@gmail.com} \\
  \And
  James Sharpnack \\
  Department of Statistics\\
  University of California, Davis\\
  Davis, CA 95616 \\
  \texttt{jsharpna@ucdavis.edu} \\
  \And
  Cho-Jui Hsieh \\ 
  Department of Computer Science\\
  University of California, Los Angles\\
  Los Angles, CA 90095 \\
  \texttt{chohsieh@cs.ucla.edu}\\
}
\begin{document}

\maketitle

\begin{abstract}

In this paper, we consider recommender systems with side information in the form of graphs. Existing collaborative filtering algorithms mainly utilize only immediate neighborhood information and do not efficiently take advantage of deeper neighborhoods beyond 1-2 hops. The main issue with exploiting deeper graph information is the rapidly growing time and space complexity when incorporating information from these neighborhoods. In this paper, we propose using Graph DNA, a novel Deep Neighborhood Aware graph encoding algorithm, for exploiting multi-hop neighborhood information. DNA encoding computes approximate deep neighborhood information in linear time using Bloom filters, 
and results in a per-node encoding whose dimension is logarithmic in the number of nodes in the graph. It can be used in conjunction with both feature-based and graph-regularization-based collaborative filtering algorithms.  Graph DNA has the advantages of being memory and time efficient and providing additional regularization when compared to directly using higher order graph information. We provide theoretical performance bounds for graph DNA encoding, and experimentally show that graph DNA can be used with 4 popular collaborative filtering algorithms, leading to a performance boost with little computational and memory overhead. 
\end{abstract}


\section{Introduction}

Recommendation systems are increasingly prevalent due to content delivery platforms, e-commerce websites, and mobile apps \cite{shani2008mining}.
Classical collaborative filtering algorithms use matrix factorization to identify latent features that describe the user preferences and item meta-topics from partially observed ratings \cite{koren2009matrix}.
In addition to rating information, many real-world recommendation datasets also have a wealth of side information in the form of graphs, and incorporating this information often leads to performance gains.
For example, \cite{rao2015collaborative, zhou2012kernelized} propose to add a graph regularization to the matrix factorization formulation to exploit additional graph structure; and \cite{liang2016factorization} conduct a co-factorization of the graph and rating matrix.
However, each of these only utilizes the immediate neighborhood information of each node in the side information graph.
 More recently, \cite{berg2017graph} incorporated graph information when learning features with a Graph Convolution Network (GCN) based recommendation algorithm.
GCNs \cite{kipf2016semi} constitute flexible methods for incorporating graph structure beyond first-order neighborhoods, but their training complexity typically scales rapidly with the depth, even with sub-sampling techniques \cite{chen2018stochastic}.
Intuitively, exploiting higher-order neighborhood information could benefit the generalization performance, especially when the graph is sparse, which is usually the case in practice. The main caveat of exploiting higher-order graph information is the high computational and memory cost when computing higher-order neighbors since the number of $t$-hop neighbors typically grows exponentially with $t$. 

In this paper, we aim to utilize higher order graph information without introducing much computational and memory overhead. To achieve this goal, we propose 
a Graph Deep Neighborhood Aware (Graph DNA) encoding, which approximately captures the higher-order neighborhood information of each node via Bloom filters \cite{bloom1970space}.
Bloom filters encode neighborhood sets as $c$ dimensional 0/1 vectors, where $c = O(\log n)$ for a graph with $n$ nodes, which approximately preserves membership information.
This encoding can then be combined with both graph regularized or feature based collaborative filtering algorithms, with little computational and memory overhead. 
In addition to computational speedups, we find that Graph DNA achieves better performance over competitors, which we hypothesize is due to the unique nature of Graph DNA and its connection to the shortest path length distance. We make this connection precise with theoretical bounds in Section \ref{sec:graph-bloom}.


We show that our Graph DNA encoding can be used with several collaborative filtering algorithms: graph-regularized matrix factorization with explicit and implicit feedback \cite{zhou2012kernelized, rao2015collaborative}, co-factoring \cite{liang2016factorization}, and GCN-based recommendation systems \cite{monti2017geometric}. 
In some cases, using information from deeper neighborhoods (like $4^{th}$ order) yields a 15x increase in performance, with graph DNA encoding yielding a 6x speedup compared to directly using the $4^{th}$ power of the graph adjacency matrix. 

\paragraph{Related Work}
\label{sec:bloom-filter}
\vspace{-2mm}
Matrix factorization has been used extensively in recommendation systems with both explicit \cite{koren2009matrix} and implicit \cite{hu2008collaborative} feedback. Such methods compute low dimensional user and item representations; their inner product approximates the observed (or to be predicted) entry in the target matrix. To incorporate graph side information in these systems, \cite{rao2015collaborative, zhou2012kernelized} used a graph Laplacian based regularization framework that forces a pair of node representations to be similar if they are connected via an edge in the graph. In \cite{yu2017unified}, this was extended to the implicit feedback setting. \cite{liang2016factorization} proposed a method that incorporates first-order information of the rating bipartite graph into the model by considering item co-occurrences. More recently, GC-MC \cite{berg2017graph} used a GCN approach performing convolutions on the main bipartite graph by treating the first-order side graph information as features, and \cite{monti2017geometric} proposed combining GCNs and RNNs for the same task. 

Methods that use higher order graph information are typically based on taking random walks on the graphs \cite{gori2007itemrank}. 
\cite{jamali2009trustwalker}  extended this method to include graph side information in the model. Finally, the PageRank \cite{page1999pagerank} algorithm can be seen as computing the steady state distribution of a Markov network, and similar methods for recommender systems was proposed in \cite{abbassi2007recommender, xie2015edge}. 

For a complete list of related works of representation learning on graphs, we refer the interested user to \cite{hamilton2017representation}. For the collaborative filtering  setting, \cite{monti2017geometric, berg2017graph} use Graph Convolutional Neural Networks \cite{defferrard2016convolutional}, but with some modifications. Standard GCN methods without substantial modifications cannot be directly applied to collaborative filtering rating datasets, including well-known approaches like GCN \cite{kipf2016semi} and GraphSage \cite{hamilton2017inductive}, because they are intended to solve semi-supervised classification problem over graphs with nodes' features. 
PinSage \cite{ying2018graph} is the GraphSage extension to non-personalized graph-based recommendation algorithm but not meant for collaborative filtering problems. GC-MC \cite{berg2017graph} extend GCN to collaborative filtering, albeit less scalable than \cite{ying2018graph}. Our Graph DNA scheme can be used to obtain graph features in these extensions.
In contrast to the above-mentioned methods involving GCNs, we do not use any loss function to train our graph encoder. This property makes our graph DNA suitable for both transductive as well as inductive problems.

Bloom filters have been used in Machine Learning for multi-label classification \cite{cisse2013robust}, and for hashing deep neural network models representations \cite{shi2009hash, han2015deep, courbariaux2015binaryconnect}. However, to the best of our knowledge, they have not been used to encode graphs, nor has this encoding been applied to recommender systems


\section{Methodology}
\label{sec:dna-algo}
We consider the problem of recommender system with a partially observed rating matrix $R$ and a 
Graph that encodes side information $G$. In this section, we will introduce the Graph DNA algorithm for encoding deep neighborhood information in $G$. In the next section, we will show how this encoded information can be applied to various graph based recommender systems. 

\subsection{Bloom Filter}
The Bloom filter~\cite{bloom1970space} is a probabilistic data structure designed
to represent a set of elements. Thanks to its space-efficiency and
simplicity, Bloom filters are applied in many real-world applications such as
database systems \cite{borthakur2011apache, chang2008bigtable}.
A Bloom filter $\cB$
consists of $k$ independent hash functions
$h_t(x) \rightarrow \cbr{1, \ldots, c}$. 
The Bloom filter $\cB$ of size $c$ can be represented as a length $c$ bit-array $\bb$. More details about Bloom filters can be found in~\cite{broder2004network}.
Here we highlight a few desirable properties of Bloom filters essential to
our graph DNA encoding: 
\begin{enumerate}
  \item Space efficiency: classic Bloom filters use $1.44 \log_2 (1/\epsilon)$
    of space per inserted key, where $\epsilon $ is the false positive rate
    associated with this Bloom filter.
  \item Support for the union operation of two Bloom filters: the Bloom filter
    for the union of two sets can be obtained by performing bitwise `OR' operations
    on the underlying bit-arrays of the two Bloom filters.
  \item Size of the Bloom filter can be approximated by the number of
    nonzeros in the underlying bit array: in particular, given a Bloom filter
    representation $\cB(A)$ of a set $A$: the number of elements of $A$ can be
    estimated as
        $\abs{A} \approx - \frac{c}{k} \log\rbr{ 1 - \frac{\nnz(\bb)}{c}}$,
    where $\nnz(\bb)$ is the number of non-zero elements in array $\bb$.
    As a result, the number of common nonzero bits of $\cB(A_1)$ and $\cB(A_2)$
    can be used as a proxy for $\abs{A_1 \cap A_2}$.
\end{enumerate}

\begin{algorithm}[H]
  \caption{Graph DNA Encoding with Bloom Filters}
\label{alg:graph-bloom}
\begin{algorithmic}[1]
\Require $G$: a graph of $n$ nodes, $c$: the length of codes, $k$: the number of hash functions, $d$: the number of iterations, $\theta$: tuning parameter to control the number of elements hashed.
\Ensure $B \in \cbr{0,1}^{n \times c}$: a boolean matrix to denote the bipartite relationship between $n$ nodes and $c$ bits.
  \begin{compactitem}
    \item $\cH \leftarrow \cbr{\mathtt{h_t}(\cdot): t = 1,\ldots, k}$ \Comment{Pick $k$ hash functions}
    \item ${\bf for}$ $i = 1,\ldots,n$: \Comment{\graphbloom Initialization}
      \begin{compactitem}
        \item $\cB^{0}\mathtt{[i]} \leftarrow \mathtt{BloomFilter}(c, \cH)$
        \item $\cB^{0}\mathtt{[i].add}(i)$
      \end{compactitem}
    \item ${\bf for}$ $s = 1,\ldots,d$: \Comment{$d$ times neighborhood propagations }
      \begin{compactitem}
      \item ${\bf for}$ $i = 1,\ldots,n$:
          \begin{compactitem}
            \item ${\bf for } $ $j \in \cN_1(i)$: \Comment{degree-1 neighbors}
              \begin{compactitem}
                \item ${\bf if}$ $|\cB^{s}\mathtt{[i]}| > \theta$: break;
                \item $\cB^{s}\mathtt{[i].union}(\cB^{s-1}\mathtt{[j]})$
              \end{compactitem}
          \end{compactitem}
      \end{compactitem}
    \item $B_{ij}\leftarrow \cB^{d}\mathtt{[i].b[j]}\ \forall (i,j) \in [n] \times [c]$
  \end{compactitem}
\end{algorithmic}
\end{algorithm}

\newcommand{\BF}[1]{\cB\mathtt{[#1]}}
\subsection{Graph DNA Encoding Via Bloom Filters}
\label{sec:graph-bloom}

Now we introduce our Graph DNA encoding. The main idea is to encode the deep (multi-hop) neighborhood aware embedding for each node in the graph
approximately using the Bloom filter, which helps avoid performing computationally expensive graph adjacency matrix multiplications.
In Graph DNA, we have Bloom filters $\BF{i}, i=1,...,n$ for the $n$ graph nodes. All the Bloom filters $\BF{i}$ share the same $k$ hash functions. The role of $\BF{i}$ is to store the deep neighborhood information of the $i$-th node.  Taking advantage of the union operations of Bloom filters, one node's neighborhood information can be propagated to its neighbors in an iterative manner using gossip algorithms \cite{shah2009gossip}. Initially, each $\BF{i}$ contains only the node itself. At the $s$-th iteration, $\BF{i}$ is updated by taking union with node $i$'s immediate neighbors' Bloom filters $\BF{j}$. By induction, we see that after the $d$ iterations, $\BF{i}$ represents $\cN_d(i) :=  \cbr{j: \text{distance}_{G}(i, j) \le d}$,
where $\text{distance}_{G}(i, j)$ is the shortest path distance between nodes $i$ and $j$ in $G$.
As the last step, we stack array representations of all Bloom filters and form a sparse matrix $B \in \cbr{0, 1}^{n \times c}$, where 
the $i$-th row of $B$ is the bit representation of $\BF{i}$. As a practical measure, to prevent over-saturation of Bloom filters for popular nodes in the graph, we add a hyper-parameter $\theta$ to control the max saturation level allowed for Bloom filters. This would also prevent hub nodes dominating in graph DNA encoding. The pseudo-code for the proposed encoding algorithm is given in Algorithm~\ref{alg:graph-bloom}. 
We use graph DNA-$d$ to denote our obtained graph encoding after applying Algorithm~\ref{alg:graph-bloom} with $s$ looping from 1 to $d$. We also give a simple example to illustrate how the graph DNA is encoded into Bloom filter representations in Figure~\ref{fig:graph-bloom}. Our usage of Bloom filters is very different from previous works in \cite{pozo2016item, serra2017getting, shinde2016user}, which use Bloom filter for standard hashing and is unrelated to graph encoding. 

It is intuitive that the number of 1-bits in common between two Bloom filters should be closely related to the size of the intersection of their neighborhoods. However, there may also be false positives in the bit-representations. 
We control precisely the size of such false positives and the number of common bits in the following theorem. 
The following theorem only applies to Bloom filters without the max saturation threshold $\theta$.

\begin{theorem}
\label{thm:bf}
Suppose that the Bloom filters have $c$ bits and the $k$ hash functions are independent for all nodes.
Consider two nodes $i,j = 1,\ldots,n$, their $d$-hop neighborhoods $\cN_d(i),\cN_d(j)$, and their $d$-depth Bloom filters $\BF{i},\BF{j}$, respectively.
Let $Q_{i,j}$ be the number of common 1-bits in the Bloom filters of $i,j$ (the inner product of the vectorized Bloom filters, $\langle\BF{i}, \BF{j}\rangle$).
There exists universal constants $C_0, C_1$, such that for any $\gamma > 0$, with probability $1 - \gamma$,
\begin{equation}
\label{eq:Q_upper}
Q_{i,j} \le \left( 1 + \frac{1}{C_0} \ln \frac{C_1}{\gamma} \right) \cdot \left( k^2 \frac{|\cN_d(i) \triangle \cN_d(j)|^2}{4c} + \frac{c k |\cN_d(i)\cap\cN_d(j)|}{c-1} \right),
\end{equation}
where $\cN_d(i) \triangle \cN_d(j)$ denotes the symmetric difference.
Furthermore, for any $\delta \in (0,1)$ there exists a constant $\alpha > 0$ such that if $c \alpha > k |\cN_d(i)\cap\cN_d(j)|$ then 
\begin{equation}
\label{eq:Q_lower}
\mathbb P \left\{ Q_{i,j} > (1 - \delta) k |\cN_d(i)\cap\cN_d(j)| \right\} \ge 1-  e^{- \frac 13 (1 - \delta) \delta^2 k |\cN_d(i)\cap\cN_d(j)|}.
\end{equation}
\end{theorem}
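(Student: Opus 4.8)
The plan is to treat $A := \cN_d(i)$ and $B := \cN_d(j)$ as fixed sets and take all probabilities over the shared hash functions $h_1,\dots,h_k$, which send each element to a uniformly random bit independently. Writing $I := A \cap B$, $A' := A \setminus B$ and $B' := B \setminus A$, I would introduce for each bit $\ell$ the indicators $Z^I_\ell, Z^{A'}_\ell, Z^{B'}_\ell$ that $\ell$ is hit by hashing $I$, $A'$, $B'$ respectively. Since these three sets are disjoint, the three indicator vectors are mutually independent, and the $\ell$-th bits of $\BF{i},\BF{j}$ are $X_\ell = Z^I_\ell \vee Z^{A'}_\ell$ and $Y_\ell = Z^I_\ell \vee Z^{B'}_\ell$. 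The elementary pointwise inequality $X_\ell Y_\ell \le Z^I_\ell + Z^{A'}_\ell Z^{B'}_\ell$ then yields the decomposition $Q_{i,j} \le M + W$, where $M := \sum_\ell Z^I_\ell$ counts the bits genuinely shared through the common neighborhood and $W := \sum_\ell Z^{A'}_\ell Z^{B'}_\ell$ counts the spurious collision bits. This split is the engine for both bounds: $M$ drives the lower bound and the intersection term of the upper bound, while $W$ produces the symmetric-difference term.

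For the upper bound I would first bound $M \le k|I|$ deterministically (the $k|I|$ insertions set at most $k|I|$ bits), which is at most $\tfrac{c}{c-1}k|I|$, matching the second factor's $I$-term. For $W$, a union bound gives $\mathbb{P}[Z^{A'}_\ell = 1] \le k|A'|/c$ and likewise for $B'$, so by independence $\mathbb{E}[W] \le k^2|A'||B'|/c \le k^2|A \triangle B|^2/(4c)$ using $|A'||B'| \le (|A\triangle B|/2)^2$ (AM--GM); this is the first factor's term. It then remains to upgrade $\mathbb{E}[W]$ to a high-probability statement. I would condition on the occupancy pattern of the $A'$-balls and use that, given this pattern, $W$ is a sum of negatively associated bin-occupancy indicators for the $B'$-balls, so a Chernoff upper-tail bound applies and gives $\mathbb{P}[W > (1+t)\mathbb{E}[W]] \le C_1 e^{-C_0 t}$. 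Setting the right-hand side to $\gamma$ inverts to the multiplicative factor $1 + \tfrac{1}{C_0}\ln\tfrac{C_1}{\gamma}$; since $M$ is already dominated deterministically by the second term, the same factor covers $M + W$ and \eqref{eq:Q_upper} follows.

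For the lower bound I would discard the collisions entirely and use $Q_{i,j} \ge M$. Exposing the $k|I|$ common-element insertions one at a time, the $r$-th insertion lands in a previously empty bit with conditional probability at least $1 - (r-1)/c \ge 1 - k|I|/c$, so $M$ stochastically dominates a $\mathrm{Binomial}(k|I|,\, 1 - k|I|/c)$ variable. The hypothesis $c\alpha > k|I|$ forces the per-insertion success probability $1 - k|I|/c$ above $1-\alpha$; choosing $\alpha$ sufficiently small relative to $\delta$ keeps the binomial mean above the target $(1-\delta)k|I|$, and a standard multiplicative Chernoff lower-tail bound then yields $\mathbb{P}[M \le (1-\delta)k|I|] \le e^{-\frac13(1-\delta)\delta^2 k|I|}$, which is exactly \eqref{eq:Q_lower}.

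The main obstacle is that the per-bit events are dependent through the shared hash randomness, so neither $W$ nor $M$ is a sum of independent indicators and the clean exponential tails are not immediate. I expect the crux to be justifying the concentration of the collision count $W$: conditioning on one filter's occupancy reduces it to negatively associated occupancy indicators for which Chernoff-type bounds remain valid, and the resulting exponential-in-$t$ (rather than Gaussian) tail is precisely what reflects the sub-exponential nature of collisions. Pinning down the universal constants $C_0, C_1$ and the threshold $\alpha$ is then routine bookkeeping within these Chernoff estimates.
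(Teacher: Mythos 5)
Your proposal is correct in substance and takes a genuinely different route from the paper's. The paper's Theorem~\ref{thm:main} keeps the exact identity $B_x \,\&\, B_y = B_3 \mid (B_1 \,\&\, B_2)$ for the partition $A_1=\cN_d(i)\setminus\cN_d(j)$, $A_2=\cN_d(j)\setminus\cN_d(i)$, $A_3=\cN_d(i)\cap\cN_d(j)$, shows via Lemma~\ref{lem:NA} that the bits of this combined array are negatively associated, applies a single Chernoff--Hoeffding bound to $Q_{i,j}$ as a whole, and then puts all remaining work into sandwiching $\mathbb{E}Q_{i,j}$ between $\Gamma_0$ and $\Gamma_1$ with analytic inequalities. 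You instead use the relaxations $Q_{i,j}\le M+W$ and $Q_{i,j}\ge M$ and treat the pieces separately: $M\le k|A_3|$ deterministically; $\mathbb{E}W\le k^2|A_1||A_2|/c$ by a union bound plus AM--GM; and, for \eqref{eq:Q_lower}, stochastic domination of $M$ by $\mathrm{Binomial}(k|A_3|,\,1-k|A_3|/c)$ via sequential exposure. Your lower-bound argument is more elementary than the paper's --- it needs no negative association at all, only a binomial Chernoff bound --- and it recovers \eqref{eq:Q_lower} with the stated constant; your bound on $\mathbb{E}W$ likewise replaces the paper's analytic inequality with transparent counting. What the paper's more monolithic route buys is that concentration is invoked once, at the scale of the full mean $\mathbb{E}Q_{i,j}$, and this matters for the upper bound.

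The one step of yours that fails as literally written is the tail bound $\mathbb{P}\{W>(1+t)\mathbb{E}W\}\le C_1e^{-C_0t}$ with universal constants. Since $W$ is integer-valued and its summands are negatively correlated, the second-moment method gives $\mathbb{P}\{W\ge1\}\ge(\mathbb{E}W)^2/\mathbb{E}[W^2]\ge\mathbb{E}W/(1+\mathbb{E}W)$; so when $\mathbb{E}W\ll1$ and $t$ is just below $1/\mathbb{E}W-1$, the left side equals $\mathbb{P}\{W\ge1\}\approx\mathbb{E}W$ while your right side is about $C_1e^{-C_0/\mathbb{E}W}$, which is far smaller. This regime --- $|A_3|\ge1$ but $k^2|\cN_d(i)\triangle\cN_d(j)|^2\ll c$ --- is exactly where your split is weaker than the paper's: the paper concentrates $Q_{i,j}$ around $\mathbb{E}Q_{i,j}\ge\Gamma_0$, which is bounded below by a constant whenever the intersection is nonempty, whereas $\mathbb{E}W$ alone can be arbitrarily small there. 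The repair keeps your structure: Chernoff bounds for (conditionally) negatively associated sums remain valid with any upper bound $\nu\ge\mathbb{E}W$ on the mean, so run your argument at the scale $\nu=k^2|\cN_d(i)\triangle\cN_d(j)|^2/(4c)+ck|A_3|/(c-1)$, the theorem's full factor; once $\nu$ is at least a universal constant, the inversion to the factor $1+\frac{1}{C_0}\ln\frac{C_1}{\gamma}$ goes through and \eqref{eq:Q_upper} follows. In fairness, the paper's own inversion --- the claim that $(e^\delta/(1+\delta)^{1+\delta})^{\mathbb{E}Q}\le C_1e^{-C_0\delta\,\mathbb{E}Q}$ for all $\delta$ --- carries the same small-mean caveat, so in both treatments \eqref{eq:Q_upper} is genuinely established only when its right-hand factor is bounded below by a constant; your decomposition just exposes the issue in a wider regime.
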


This theorem is a corollary of the more precise Theorem \ref{thm:main}, which is stated in the Appendix.
In order to establish these results, we provide Lemma \ref{lem:NA}, which demonstrates that the bits of Bloom filters are negatively associated (basic properties of negative associativity can be found in \cite{dubhashi1998balls,joag1983negative}), and this property is preserved under bitwise `or' and `and' operations on independent Bloom filters.
As a result, $Q_{i,j}$ enjoys Chernov-Hoeffding bounds, and the result follows by analyzing its expectation.

\begin{remark}
When the neighborhoods have no intersection, $|\cN_d(i)\cap\cN_d(j)| = 0$ then we have that $Q_{i,j} = O_P ( k^2 |\cN_d(i)\cup\cN_d(j)|^2 / c )$ which is approaching $0$ when $k |\cN_d(i)\cup\cN_d(j)|  = o( \sqrt c)$ (the number of bits in the Bloom filters are taken to be large enough) by \eqref{eq:Q_upper}. 
\end{remark}
\begin{remark}
Generally, \eqref{eq:Q_lower} states that when the number of hashed functions for the intersection is large, $k |\cN_d(i)\cap\cN_d(j)| \rightarrow \infty$, but dominated by the number of bits, $k |\cN_d(i)\cup\cN_d(j)|  = o(c)$, then we have that $\lim \left( Q_{i,j} / (k |\cN_d(i)\cap\cN_d(j)|) \right) \ge 1$ almost surely.  
For fixed neighborhood sizes, we can take $c \propto \log n$ and $k \propto \log \log n$, and obtain that $Q_{i,j}/k = O_P(|\cN_d(i)\cap\cN_d(j)|)$ by \eqref{eq:Q_upper} and $Q_{i,j}/k = \Omega_P(|\cN_d(i)\cap\cN_d(j)|)$ by \eqref{eq:Q_lower}.
\end{remark}

Graph DNA encodes deep neighborhood information such that for any two nodes whose shortest path length distance is at most $2d$, we only need to run Algorithm~\ref{alg:graph-bloom} for $d$ iterations. For example, in Figure~\ref{fig:DNA}, nodes $x$ and $y$ are 6 hops away on the shortest path, but they will start to share their bits' representations after 3 iterations because the node $z$'s information can be propagated to node $x$ and $y$ after exactly 3 iterations.
Theorem \ref{thm:bf} and the remarks that follow it demonstrate that by increasing the number of hash functions and the number of bits in the Bloom filter, the number of common 1-bits in these Bloom filters becomes an accurate surrogate for $|\cN_d(x)\cap\cN_d(y)|$.

The $n \times c$ Bloom filter matrix $B$ can also be viewed as the adjacency matrix of a bipartite graph between the $n$ nodes in the original graph and $c$ meta nodes of Bloom filters. 
In this way, nodes $x$ and $y$ have a bit in common in their Bloom filter representations if they are both connected to at least one meta node in $B$. This property saves memory and time required for graph encoding, allowing us to use $B$ instead of the adjacency matrix $G$ in graph Laplacian regularization methods \cite{rao2015collaborative}, and to use $B$ as side features in graph convolutional network based geometric matrix factorization algorithm \cite{monti2017geometric, berg2017graph} with little computational and memory overhead. We elaborate on this in the following section.

\begin{figure*}[ht]
\begin{center}
\centerline{\includegraphics[width=0.8\columnwidth]{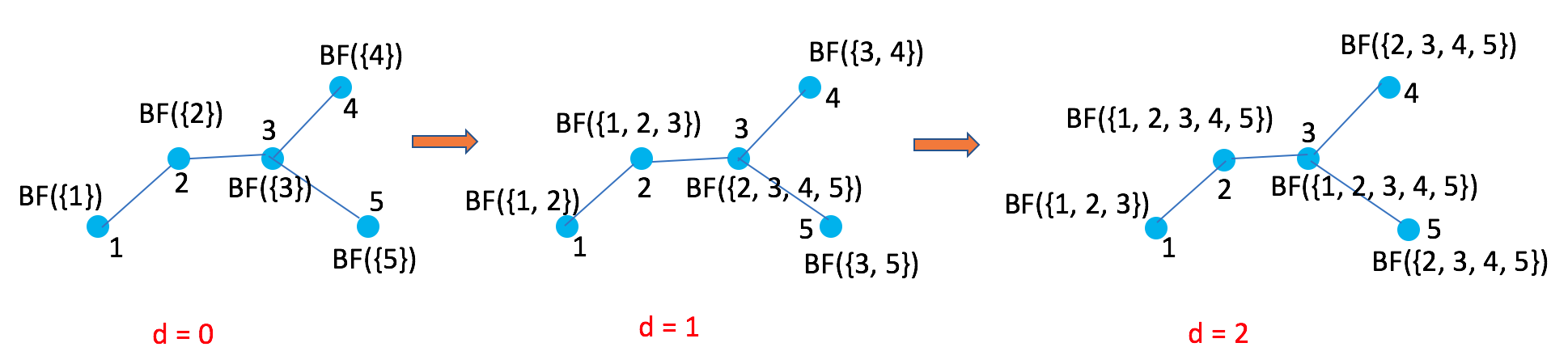}}
\end{center}
\caption{Illustration of Algorithm~\ref{alg:graph-bloom}: the graph DNA encoding procedure. The curly brackets at each node indicate the nodes encoded at a particular step.  At $d=0$ each node's Bloom filter only encodes itself, and multi-hop neighbors are included as d increases.}
\label{fig:graph-bloom}
\vspace{-10pt}
\end{figure*}

\begin{figure}[ht]
\begin{center}
\centerline{\includegraphics[width=0.9\columnwidth]{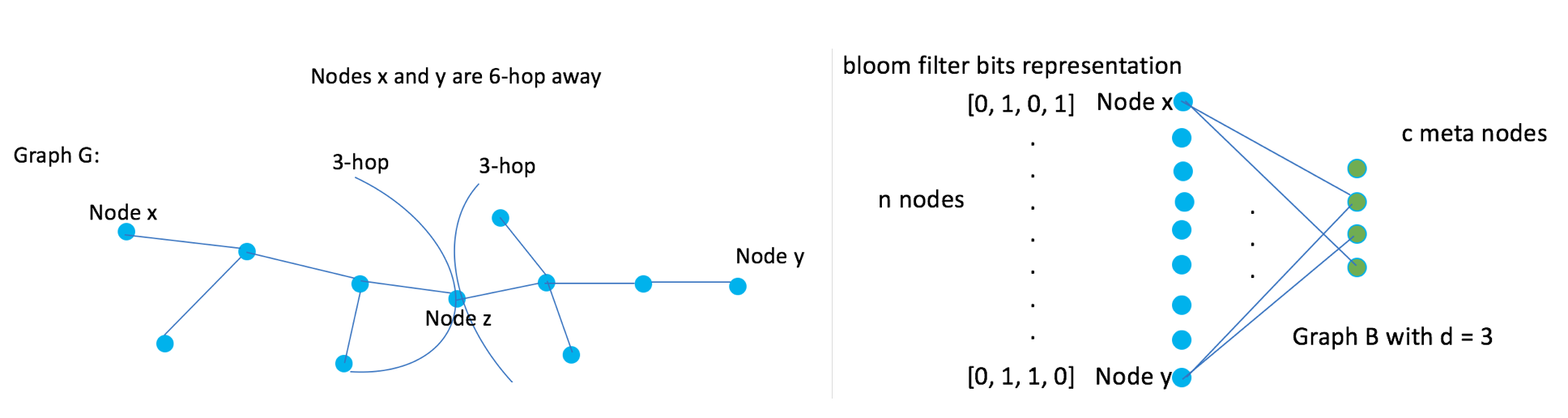}}
\end{center}
\caption{Illustration of our proposed DNA encoding method (DNA-3), with the corresponding bipartite graph representation.}
\label{fig:DNA}
\vspace{-10pt}\end{figure}

\section{Collaborative Filtering with Graph DNA}
\label{sec:app}
Suppose we are given the sparse rating matrix $R \in \dR^{n \times m}$  with $n$ users and $m$ items, and a graph $G \in \dR^{n\times n}$ encoding relationships between users. For simplicity, we do not assume a graph on the $m$ items, though including it is straightforward. 
%

\subsection{Graph Regularized Matrix Factorization}\label{sec:grmf}


The objective function of Graph Regularized Matrix Factorization (GRMF)  \cite{cai2011graph, rao2015collaborative, zhou2012kernelized} is: 
\begin{equation}\label{eq:grmf}
\min_{U,V} \sum_{(i,j) \in \Omega} \left(R_{i,j} - u_i^\top v_j \right)^2 + \frac{\lambda}{2} (\|U\|_F^2 + \|V\|_F^2) +\mu \trace(U^\top \text{Lap}(G) U)
\end{equation}
where $U \in \dR^{n \times r}, V \in \dR^{m \times r}$ are the embeddings associated with users and items respectively, $\trace$ is the trace operator, $\lambda, \mu$ are tuning coefficients, and $\text{Lap}(\cdot)$ is the Laplacian of $G$.  

The last term is called graph regularization, which
tries to enforce similar nodes (measured by edge weights in $G$) to have similar embeddings. 
One naive way \cite{cao2015grarep} to extend this to higher-order graph regularization is to replace the graph $G$ with $\sum_{i=1}^K w_i \cdot G^i$ and then use the graph Laplacian of $\sum_{i=1}^K w_i \cdot G^i$ to replace $G$ in \eqref{eq:grmf}. Computing $G^i$ for even small $i$ is computationally infeasible for most real-world applications, and we will soon lose the sparsity of the graph, leading to memory issues. 
Sampling or thresholding could mitigate the problem but suffers from performance degradation.

In contrast, our graph DNA $B$ from Algorithm~\ref{alg:graph-bloom} does not suffer from any of the issues. Theorem~\ref{thm:bf} implies that the space complexity of our method is only of order $O(n \log n)$ for a graph with $n$ nodes, instead of $O(n^2)$. The reduced number of non-zero elements using graph DNA leads to a significant speed-up in many cases.  


We can easily use graph DNA in GRMF as follows: we treat the $c$ bits as $c$ new pseudo-nodes and add them to the original graph $G$. We then have $n+c$ nodes in a modified graph $\dot{G}$: 

\begin{equation}\label{eq:G1}
\dot{G} =
\begin{bmatrix}
    G \in \dR^{n \times n}      & B\in \dR^{n \times c} \\
    B^\top\in \dR^{c \times n}       & \mathbf{0}\in \dR^{c \times c}
\end{bmatrix}.
\end{equation}

To account for the $c$ new nodes, we expand $U \in \dR^{n \times r}$ to $\dot{U} \in \dR^{(n+c) \times r}$ by appending parameters for the meta-nodes.
The objective function for GRMF with Graph DNA with be the same as \eqref{eq:grmf} except replacing $U$ and $G$ with $\dot{U}$ and $\dot{G}$.
At the prediction stage, we discard the meta-node embeddings. 


For implicit feedback data, when $R$ is a 0/1 matrix, weighted matrix factorization is a widely used algorithm~\cite{hu2008collaborative, hsieh2015pu}. The only difference is that the loss function in \eqref{eq:grmf} 
is replaced by $\sum_{(i, j): R_{ij}=1} (R_{ij}-u_i^T v_j)^2 + \sum_{(i, j): R_{ij}=0} \rho (R_{ij}-u_i^T v_j)^2$ where $\rho< 1$ is a hyper-parameter reflecting the confidence of zero entries. In this case, we can apply the Graph DNA encoding as before trivially. We also describe how to apply graph DNA towards Co-Factor \cite{singh2008relational, liang2016factorization} and Graph Convolutional Matrix Completion \cite{berg2017graph} in the Appendix.

\section{Experiments}
\label{sec:exp}
We show that our proposed Graph DNA encoding technique can improve the performance of 4 popular graph-based recommendation algorithms: graph-regularized matrix factorization, co-factorization, weighted matrix factorization, and GCN-based graph convolution matrix factorization.  
All experiments except GCN are conducted on a server with Intel Xeon E5-2699 v3 @ 2.30GHz CPU and 256$G$ RAM. The GCN experiments are conducted on Google Cloud with Nvidia V100 GPU.

\paragraph{Simulation Study}
\label{sec:exp-syn}

We first simulate a user/item rating dataset with user graph as side information, generate its graph DNA, and use it on a downstream task: matrix factorization. 

We randomly generate user and item embeddings from standard Gaussian distributions, and construct an Erd\H{o}s-R\'{e}nyi Random graphs of users. User embeddings are generated using Algorithm~\ref{alg:sim} in Appendix: at each propagation step, each user's embedding is updated by an average of its current embedding and its neighbors' embeddings.
Based on user and item embeddings after $T=3$ iterations of propagation,
we generate the underlying ratings for each user-item pairs according to the inner product of their embeddings, and then sample a small portion of the dense rating matrix as  training and test sets.




We implement our graph DNA encoding algorithm in python using a scalable python library \cite{almeida2007scalable}
to generate Bloom filter matrix $B$.  We adapt the GRMF C++ code to solve the objective function of GRMF\_DNA-K with our Bloom filter enhanced graph $\dot{G}$.
We compare the following variants:
\begin{enumerate}
    \item MF: classical matrix factorization only with $\ell_2$ regularization without graph information.
    \item GRMF\_$G^d$: GRMF with $\ell_2$ regularization and using $G$, $G^2$, \dots, $G^d$ \cite{cao2015grarep}.
    \item GRMF\_DNA-$d$: GRMF with $\ell_2$ but using our proposed graph DNA-$d$.
   
\end{enumerate}

We report the prediction performance with Root Mean Squared Error (RMSE) on test data.
All results are reported on the test set, with all relevant hyperparameters tuned on a held-out validation set. 
To accurately measure how large the relative gain is from using deeper information, we introduce a new metric called Relative Graph Gain (RGG) for using information $X$, which is defined as:
\begin{equation}\label{eq:rgg}
    \text{RGG}(X) = \left(\frac{\text{RMSE without Graph} - \text{RMSE with } X}{\text{RMSE without Graph} - \text{RMSE with } G} 
    - 1 \right) \times 100 \%, 
\end{equation} where RMSE is measured for the same method with different graph information. This metric would be 0 if only first order graph information is utilized and is only defined when the denominator is positive.

In Table~\ref{tab:mf_res}, we can easily see that using a deeper neighborhood helps the recommendation performances on this synthetic dataset. Graph DNA-3's gain is 166\%  larger than that of using first-order graph $G$. We can see an increase in performance gain for an increase in depth $d$ when $d \leq 3$. This is expected because we set $T = 3$ during our creation of this dataset.



\paragraph{Graph Regularized Matrix Factorization for Explicit Feedback}
\label{sec:exp-explicit}

Next, we show that graph DNA can improve the performance of GRMF for explicit feedback. We conduct experiments on two real datasets:  Douban  \cite{ma2011recommender} and Flixster  \cite{Zafarani+Liu:2009}.
Both datasets contain explicit feedback with ratings from 1 to 5. There are 129,490 users, 58,541 items in Douban. There are 147,612 users, 48,794 items in Flixster. Both datasets have a graph defined on the respective sets of users.  


We pre-processed Douban and Flixster following the same procedure in \cite{rao2015collaborative, wu2017large}. The experimental setups and comparisons are almost identical to the synthetic data experiment (see details in section~\ref{sec:exp-syn}). Due to the exponentially growing non-zero elements in the graph as we go deeper (see Table~\ref{tab:nnz}), we are unable to run full
GRMF\_$G^4$ and GRMF\_$G^5$ for these datasets. In fact,  GRMF\_$G^3$ itself is too slow so we thresholded $G^3$ by only considering entries whose values are equal to or larger than 4. For the Bloom filter, we set a false positive rate of 0.1 and use capacity of 500 for Bloom filters, resulting in $c = 4,796$. 


We can see from Table~\ref{tab:mf_res} that deeper graph information always helps. For Douban, graph DNA-3 is most effective, giving a relative graph gain of 82.79\% compared to only 2\% gain when using $G^2$ or $G^3$ naively. Interestingly for Flixster, using $G^2$ is better than using $G^3$. However, Graph DNA-3 and DNA-4 yield $10$x and $15$x performance improvements respectively, lending credence to the implicit regularization property of graph DNA. 
For a fixed size Bloom filter, the computational complexity of graph DNA scales linearly with depth $d$, as compared to exponentially for GRMF\_$G^d$.
We measure the speed in Table~\ref{tab:dnaspeed}. The memory cost is only a fraction of $n^2$ after hashing. Such low memory and computational complexity allow us to scale to larger $d$, compared to baseline methods.

\begin{table}
  \caption{Comparison of Graph Regularized Matrix Factorization Variants for Explicit Feedback on Synthetic, Douban and Flixster data. We use rank $r = 10$. RGG is the Relative Graph Gain in \eqref{eq:rgg}.} 
  \vskip -0.1in
  \label{tab:mf_res}
  \resizebox{\textwidth}{!}{
  \begin{tabular}{ccccccc}
    \toprule
    &\multicolumn{2}{c}{Synthetic} & \multicolumn{2}{c}{Douban} & \multicolumn{2}{c}{Flixster}\\
    \cmidrule(r){2-3} \cmidrule(r){4-5}  \cmidrule(r){6-7} 
    Dataset & RMSE ($ \times 10^{-1}$)  & \% RGG & RMSE ($ \times 10^{-1}$) & \% RGG & RMSE ($ \times 10^{-1}$)  & \% RGG \\
    \midrule
    
      MF        & 2.9971  & -  & 7.3107   & -  & 8.8111     & - \\
    GRMF\_$G$   & 2.7823 & 0  & 7.2398  & 0  & 8.8049   & 0\\
    GRMF\_$G^2$  & 2.6543  & 59.5903 & 7.2381  & 2.3977 & 8.7849  & 322.5806 \\
    GRMF\_$G^3$ & 2.5687  & 99.4413  & 7.2432  & -4.7954 & 8.7932  & 188.7097 \\
    GRMF\_$G^4$ & 2.5562 & 105.2607 & - & - & - & -\\
    GRMF\_$G^5$  & 2.4853  & 138.2682 & - & - & - & -\\
    GRMF\_$G^6$  & 2.4852  & 138.3147 & - & - & - & -\\
    GRMF\_DNA-1   & 2.4303  & 163.8734  & 7.2191  & 29.1960 & 8.8013   & 58.0645\\
    GRMF\_DNA-2  & 2.4510  & 154.2365 & 7.2359 & 5.5007 & 8.8007  & 67.7419 \\
    GRMF\_DNA-3  & \bfseries{2.4247}  &\bfseries{166.4804}  & \bfseries{7.1811}  & \bfseries{82.7927} & 8.7383  & 1074.1935\\
    GRMF\_DNA-4  & 2.4466   & 156.2849 & 7.1971 & 60.2257 & \bfseries{8.7122}  & \bfseries{1495.1613} \\
    \midrule
    Co-Factor\_$G$    & - & -   & 7.2743    & 0  & 8.7957    & 0   \\
    Co-Factor\_DNA-3  & - & - &\bfseries{7.2623}    & \bfseries{32.9670} & \bfseries{8.7354}   & \bfseries{391.5584} \\
    
    
  \bottomrule
\end{tabular}
}
\vskip -0.1in
\end{table}

\begin{table}
  \caption{Graph DNA (Algorithm~\ref{alg:graph-bloom}) Encoding Speed. We set number $c = 500$ and implement Graph DNA using single-core python. We can scale up linearly in terms of depth $d$ for a fixed $c$. } 
  \vskip -0.1in
  \label{tab:dnaspeed}
\resizebox{0.9\textwidth}{!}{
  \begin{tabular}{ccccccc}
    \toprule
    & \multicolumn{2}{c}{Graph Statistics} & \multicolumn{4}{c}{Graph DNA Encoding Time (secs)}\\
    \cmidrule(r){2-3} \cmidrule(r){4-7}
    Dataset & Number of Nodes & Graph Density & DNA-1 & DNA-2 & DNA-3 & DNA-4  \\
    \midrule
   
    Douban & 129,490 & 0.0102\% &132.2717 &   266.3740 & 403.9747 & 580.1547   \\
   Flixster & 147,612 & 0.0117\% & 157.3103  &  317.7706   & 482.0360   & 686.8048    \\
  \bottomrule
\end{tabular}
 }
\vskip -0.1in
\end{table}

\begin{table}
  \caption{Comparison of GRWMF Variants for Implicit Feedback on Douban and Flixster datasets. P stands for precision and N stands for NDCG. We use rank $r = 10$ and all results are in $\%$.}
  \vskip -0.1in
  \label{tab:implicit_res}
   \resizebox{0.8\textwidth}{!}{
  \begin{tabular}{llccccccc}
    \toprule
    Dataset & Methods & MAP & HLU & P@$1$ & P@$5$ & N@$1$ & N@$5$  \\
    \midrule
    \multirow{2}{*}{Douban}
    &GRWMF\_$G$               & 8.340 & 13.033 & 14.944 & 10.371 & 14.944 & 12.564    \\
    &GRWMF\_DNA-3 & \bfseries{8.400} & \bfseries{13.110} & \bfseries{14.991} & \bfseries{10.397} & \bfseries{14.991} & \bfseries{12.619}     \\
     \midrule
    \multirow{2}{*}{Flixster}
    &GRWMF\_$G$              & 10.889 & 14.909 & 12.303 & 7.9927 & 12.303 & 12.734   \\
     &GRWMF\_DNA-3 & \bfseries{11.612} & \bfseries{15.687} & \bfseries{12.644} & \bfseries{8.1583} & \bfseries{12.644} & \bfseries{13.399}    \\
  \bottomrule
\end{tabular}
 }
\end{table}




\begin{table}
  \caption{Comparison of GCN Methods for Explicit Feedback on Douban, Flixster and Yahoo Music datasets (3000 by 3000 as in \cite{berg2017graph, monti2017geometric}). All the methods except GC-MC utilize side graph information. } 
  \vskip -0.1in
  \label{tab:gcn}
  \resizebox{\textwidth}{!}{
  \begin{tabular}{llcccc}
    \toprule
    Dataset & Methods & Test RMSE ($ \times 10^{-1}$) & Time/epoch (secs)  & \% RGG & Speedup  \\
    \midrule
    \multirow{2}{*}{Douban}
     & SRGCNN (reported by \cite{berg2017graph}) & 8.0100  & - & - & - \\
     &GC-MC                  &  7.3109 $\pm$ 0.0150   & \bfseries{0.0410} & -  &9.72x \\
    &GC-MC\_$G$                    & 7.3698 $\pm$  0.0737  & 0.3985 & N/A  &  1.00x \\
    &GC-MC\_$G^2$                 & 7.3123  $\pm$  0.0139 & 0.4221 & N/A  &  0.94x   \\
    &GC-MC\_DNA-2  & \bfseries{7.3117} $\pm$ \bfseries{0.0129}     & 0.1709 & N/A  & 2.33x \\
    \midrule
    \multirow{2}{*}{Flixster}
    & SRGCNN (reported by \cite{berg2017graph}) & 9.2600  & - & - & - \\
     &GC-MC                   & 9.2614  $\pm$ 0.0578  & \bfseries{0.0232}   & -   & \bfseries{13.65x}   \\
     &GC-MC\_$G$                    &  9.2374 $\pm$  0.1045 & 0.3166   & 0   & 1.00x   \\
      &GC-MC\_$G^2$                    & \bfseries{8.9344} $\pm$  \bfseries{0.0333} &  0.3291  &   \bfseries{1262.4999} & 0.96x  \\
    &GC-MC\_DNA-2  & 8.9536  $\pm$ 0.0770   & 0.0524 & 1182.4999 & 6.04x \\
    \midrule
    \multirow{2}{*}{Yahoo Music}
     & SRGCNN (reported by \cite{berg2017graph}) & 22.4000 & - & - & - \\
     
      &GC-MC  &  22.6697 $\pm$  0.3530 &  \bfseries{0.0684}  & -   & \bfseries{1.75x}  \\
     &GC-MC\_$G$  &  21.3672 $\pm$  0.4190 & 0.1198   & 0   & 1.00x\\
    &GC-MC\_$G^2$    & 20.2189 $\pm$  0.8664 & 0.1177 & 88.1612 & 1.02x     \\
    &GC-MC\_DNA-2  & \bfseries{19.3879} $\pm$ \bfseries{0.2874}     &  0.0896 & \bfseries{151.9616} & 1.34x \\
  \bottomrule
\end{tabular}
}
\vskip -0.1in
\end{table}

\paragraph{Co-Factorization with Graph for Explicit Feedback}\label{sec:co-factor} 
We show our graph DNA can improve Co-Factor \cite{singh2008relational, liang2016factorization} as well. 
The results are in Table~\ref{tab:mf_res}. We find that applying DNA-3 to the Co-Factor method improves performance on both the datasets, more so for Flixster. 
This is consistent with our observations for GRMF in Table~\ref{tab:mf_res}: deep graph information is more helpful for Flixster than Douban. Applying Graph DNA to Co-Factor is detailed in the Appendix.

\paragraph{Graph Regularized Weighted Matrix Factorization for Implicit feedback}
\label{sec:exp-implicit}
We follow the same procedure as in \cite{wu2018sql} to set ratings of 4 and above to 1, and the rest to 0. 
We compare the baseline graph based weighted matrix factorization \cite{hu2008collaborative, hsieh2015pu} with our proposed weighted matrix factorization with DNA-3. We do not compare with Bayesian personalized ranking \cite{rendle2009bpr} and the recently proposed SQL-rank \cite{wu2018sql} 
as they cannot easily utilize graph information.



The results are summarized in Table~\ref{tab:implicit_res} with experimental details in the Appendix. Again, using DNA-3 achieves better prediction results over the baseline in terms of every single metric on both Douban and Flixster datasets.

\paragraph{Graph Convolutional Matrix Factorization}
Graph Convolutional Matrix Completion (GC-MC) is a graph convolutional network (GCN) based geometric matrix completion method \cite{berg2017graph}. In \cite{berg2017graph}, the side graphs over users and items are represented as the adjacency matrices and these one-hot encodings are treated as features for nodes in the graph. Convolutions of these features are performed on the bipartite rating graph. We find in our experiments that using these one-hot encodings of the graph as feature is an inferior choice both in terms of performance and speed. To capture higher order side graph information, it is better to use $G + \alpha G^2$ for some constant $\alpha$. 
Again, we can use graph DNA instead to efficiently encode and store the higher order information before feeding it into GC-MC. The exact means to use Graph DNA is detailed in the Appendix.

\begin{wrapfigure}{r}{0.5\textwidth}
\vspace{-2em}
  \begin{center}
    \includegraphics[width=\textwidth]{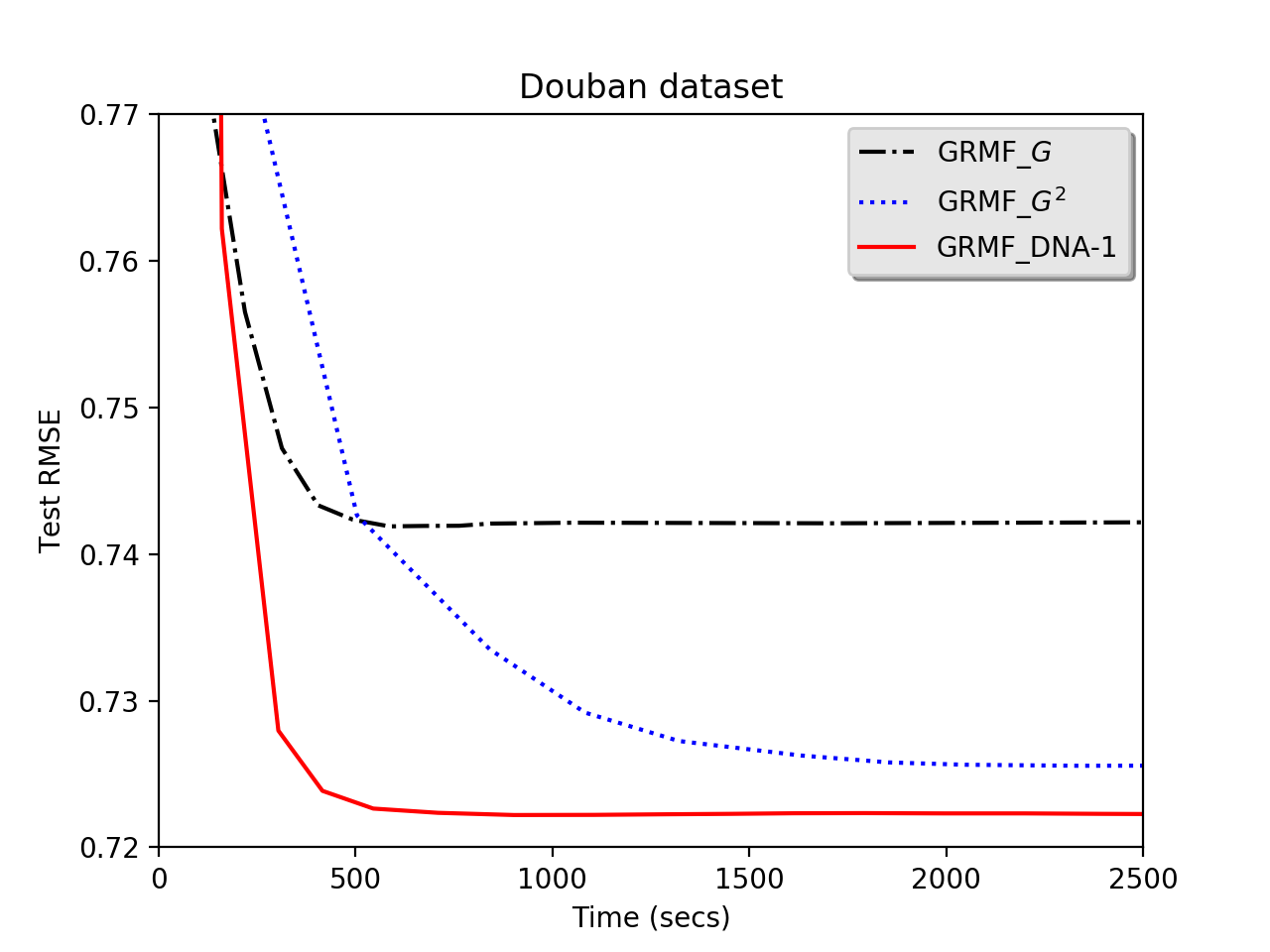}
  \end{center}
  \vspace{-5pt}
  \caption{Compare Training Speed of GRMF, with and without Graph DNA.}
  \vspace{-2em}
  \label{fig:speed}
\end{wrapfigure}
We use the same split of three real-world datasets and follow the exact procedures as in \cite{berg2017graph, monti2017geometric}.
We tuned hyperparameters using a validation dataset and obtain the best test results found within 200 epochs using optimal parameters. We repeated the experiments 6 times and report the mean and standard deviation of test RMSE. After some tuning, we use the capacity of 10 Bloom filters for Douban and 60 for Flixster, as the latter has a much denser second-order graph. With a false positive rate of 0.1, this implies that we use 96-bits Bloom filters for Douban and 960 bits for Flixster. So the feature dimension is reduced from 3000 to 96 and 960 when using our graph DNA-2, which leads to a significant speed-up. The original GC-MC method did not scale up well beyond 3000 by 3000 rating matrices with the user and the item side graphs as it requires using normalized adjacency matrix as user/item features. PinSage \cite{ying2018graph}, while scalable, does not utilize the user/item side graphs. Furthermore, it is not feasible to have $O(n)$ dimensional features for the nodes, where $n$ is the number of nodes in side graphs. By contrast, our method only requires $O(\log(n))$ dimensional features. We can see from Table~\ref{tab:gcn} that we outperform both GCN-based methods \cite{berg2017graph} and \cite{monti2017geometric} in terms of speed and performance by a large margin. 
\vspace{-0.5em}
\paragraph{Speed Comparisons}\label{sec:speed}
Finally, we compare the speed-ups obtained by graph DNA-$d$ with GRMF $G^d$. Since both algorithms scale with the number of edges in the constructed graph, we see that the Bloom filter based method scales substantially better compared to computing and using $G^2$ in Figure~\ref{fig:speed}.

\vspace{-1em}
 \section{Conclusion}
 \label{sec:conclusion}
\vspace{-1em}
In this paper, we proposed Graph DNA, a deep neighborhood aware encoding scheme for collaborative filtering with graph information. We make use of Bloom filters to incorporate higher order graph information, without the need to explicitly minimize a loss function. The resulting encoding is extremely space and computationally efficient, and lends itself well to multiple algorithms that make use of graph information, including Graph Convolutional Networks. Experiments show that Graph DNA encoding outperforms several baseline methods on multiple datasets in both speed and performance.

%


\medskip
\bibliography{neurips_2019}
\small

\newpage
\section{Appendix}

\subsection{Theory for Bloom Filters}
\label{sec:theory}

\begin{theorem}
\label{thm:main}
Let $B_x, B_y$ be the Bloom filter bitarrays for $N(x), N(y)$ with independent hash functions for all elements of $N(x) \cup N(y)$ and $|N(x) \triangle N(y)|$ be their symmetric difference.
Let $Q$ be the number of common 1-bits in $B_x, B_y$, then we have that,
\begin{align*}
\mathbb P \left\{ Q \ge (1 + \delta) \mathbb E Q \right\} \le \left( \frac{e^\delta}{(1 + \delta)^{(1 + \delta)}}  \right)^{\mathbb E Q},\\
\mathbb P \left\{ Q \le (1 - \delta) \mathbb E Q \right\} \le \left( \frac{e^{-\delta}}{(1 - \delta)^{(1 - \delta)}}  \right)^{\mathbb E Q},
\end{align*}
and $\Gamma_0 \le \mathbb E Q \le \Gamma_1$ where
\begin{align*}
&\Gamma_0 = c \left( 1 - \exp\left\{ - k \frac{|N(x) \cap N(y)|}{c} \right\} \right), \\
&\Gamma_1 = c \left( 1 - \exp\left\{ - k^2 \frac{|N(x) \triangle N(y)|^2}{4c^2} - \frac{k |N(x) \cap N(y)|}{c-1} \right\}\right).
\end{align*}
\end{theorem}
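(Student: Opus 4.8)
The statement splits into two essentially independent parts: the two multiplicative tail bounds, and the sandwich $\Gamma_0 \le \mathbb{E} Q \le \Gamma_1$. I would write $Q$ as a sum of per-bit indicators, $Q = \sum_{b=1}^{c} Z_b$ with $Z_b = \mathbf{1}[B_x(b)=1 \text{ and } B_y(b)=1]$, and handle concentration and expectation separately. The two displayed inequalities are exactly the Chernoff bounds for a sum of Bernoulli variables, so the only real work there is to justify that the dependent $Z_b$ still obey such bounds; this I would get from negative association. The expectation bounds then follow from computing $\mathbb{E} Z_b$ exactly and applying a couple of elementary scalar inequalities.

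\textbf{Concentration via negative association.} I would model the hashing as balls into bins: for each element $e \in N(x) \cup N(y)$ and each hash index $t \le k$, the ball $(e,t)$ lands in bin $h_t(e) \in \{1,\dots,c\}$, and I set $X_{e,t,b} = \mathbf{1}[h_t(e)=b]$. For a fixed ball the vector $\{X_{e,t,b}\}_{b=1}^{c}$ is a single-occupancy $0/1$ vector summing to one, hence negatively associated; distinct balls are independent, so the whole collection is negatively associated (a union of independent NA families). Now $Z_b$ is a nondecreasing function of precisely the atoms carrying bin-index $b$, namely $\{X_{e,t,b} : e \in N(x)\cup N(y),\, t \le k\}$, and these coordinate supports are disjoint across $b$. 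By closure of NA under monotone functions of disjoint coordinate blocks --- the property formalized in Lemma~\ref{lem:NA}, together with its stability under the bitwise OR that builds each filter and the AND that defines $Z_b$ --- the family $\{Z_b\}_{b=1}^{c}$ is negatively associated. Sums of NA $\{0,1\}$ variables satisfy the same moment-generating-function inequality as in the independent case for both signs of the exponent, so the standard Chernoff argument reproduces the two displayed bounds verbatim with $\mathbb{E} Q = \sum_b \mathbb{E} Z_b$.

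\textbf{Exact expectation.} By symmetry $\mathbb{E} Q = c\,\mathbb{P}[Z_1 = 1]$. To compute $\mathbb{P}[Z_1=1]$ I partition $N(x)\cup N(y)$ into $I = N(x)\cap N(y)$, $X = N(x)\setminus N(y)$, $Y = N(y)\setminus N(x)$, whose hashes are mutually independent. Writing $q = 1-1/c$ and $w = q^{k|I|}$, $u = q^{k|X|}$, $v = q^{k|Y|}$ for the probabilities that bin $1$ is missed by all hashes of $I$, $X$, $Y$, bin $1$ is a common $1$-bit iff it is hit by $I$, or missed by $I$ but hit by both $X$ and $Y$. Hence
\begin{equation*}
\mathbb{E} Q = c\big[(1-w) + w(1-u)(1-v)\big] = c - c\,w\,(u+v-uv).
\end{equation*}

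\textbf{The two bounds, and the main obstacle.} For $\Gamma_0$ I drop the nonnegative term $w(1-u)(1-v)\ge 0$ to get $\mathbb{E} Q \ge c(1-w)$, and use $q \le e^{-1/c}$ so that $w \le e^{-k|I|/c}$, giving $\mathbb{E} Q \ge c(1-e^{-k|I|/c}) = \Gamma_0$. For $\Gamma_1$ I must lower-bound $w(u+v-uv)$ by $e^{-\beta}$ with $\beta = \frac{k^2|N(x)\triangle N(y)|^2}{4c^2} + \frac{k|I|}{c-1}$, and I would bound the two factors separately: from $\ln(1-1/c)\ge -\frac{1}{c-1}$ I obtain $w \ge e^{-k|I|/(c-1)}$, supplying the second term of $\beta$; and from $1-q^{k|X|}\le k|X|/c$, $1-q^{k|Y|}\le k|Y|/c$ (Bernoulli) together with AM--GM $|X|\,|Y| \le |N(x)\triangle N(y)|^2/4$ I get $(1-u)(1-v) \le \frac{k^2|N(x)\triangle N(y)|^2}{4c^2}$, so that $u+v-uv = 1-(1-u)(1-v)$ is at least $1 - \frac{k^2|N(x)\triangle N(y)|^2}{4c^2}$. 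The delicate step is exactly the re-exponentiation of this symmetric-difference factor, since the convenient scalar inequality $1-t \le e^{-t}$ runs against the direction needed here and only matches to lower order; I therefore expect the bulk of the care to go into the scalar estimate converting $1 - \frac{k^2|N(x)\triangle N(y)|^2}{4c^2}$ into $e^{-k^2|N(x)\triangle N(y)|^2/(4c^2)}$ and into checking that the extremal configuration for fixed $|X|+|Y|$ is the balanced one $|X|=|Y|$ so the AM--GM step is tight. The NA closure under the bitwise AND is the other place I would spend effort, as it is what licenses the Chernoff bounds for the dependent indicators $Z_b$.
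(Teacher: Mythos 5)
Your decomposition of $Q$ into per-bit indicators, the negative-association argument, the exact expectation, and the $\Gamma_0$ bound are all correct and essentially reproduce the paper's own proof. The paper's Lemma~\ref{lem:NA} is precisely your balls-in-bins claim: the one-hot array of a single (vertex, hash) pair is NA by the zero-one property \cite{dubhashi1998balls}, NA is closed under independent unions and under monotone maps on disjoint index sets \cite{joag1983negative}, and hence the bits of $B_x \& B_y$ are NA --- the paper routes this through the identity $B_x \& B_y = B_3 | (B_1 \& B_2)$ for the independent filters $B_1,B_2,B_3$ of $A_1 = N(x)\setminus N(y)$, $A_2 = N(y)\setminus N(x)$, $A_3 = N(x)\cap N(y)$, whereas you apply monotone closure directly to the $Z_b$; both are valid, yours marginally more direct. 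The tail bounds are then quoted from Chernoff--Hoeffding for NA variables exactly as you propose, the paper's per-bit expectation $\mathbb E q_i = 1-\left(1-\mathbb E[B_{1,i}]\,\mathbb E[B_{2,i}]\right)\left(1-\mathbb E[B_{3,i}]\right)$ is your $1 - w(u+v-uv)$, and your $\Gamma_0$ derivation (discard $w(1-u)(1-v)\ge 0$, then $(1-1/c)^{k|A_3|}\le e^{-k|A_3|/c}$) is also the paper's.

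The genuine gap is the $\Gamma_1$ upper bound, exactly where you flagged it, and the difficulty is fatal to your route rather than a matter of care. Once you relax $(1-u)(1-v) \le t := k^2|N(x)\triangle N(y)|^2/(4c^2)$ by Bernoulli, you hold $u+v-uv \ge 1-t$ and need $u+v-uv \ge e^{-t}$; since $1-t < e^{-t}$ for every $t>0$, no subsequent scalar estimate can bridge the two --- the linearization has already discarded precisely the second-order quantity you would need to recover. The paper's proof never linearizes: it asserts the coupled inequality $\log\left(1-(1-(1-x)^a)(1-(1-x)^b)\right) \ge -abx^2$ with $x=1/c$, $a=k|A_1|$, $b=k|A_2|$, applied to the exact product of expectations, and only afterwards uses AM--GM ($|A_1||A_2|\le |N(x)\triangle N(y)|^2/4$) inside the exponent. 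You should know, however, that your unease runs deeper than you thought: the paper's inequality is itself false at $a=b=1$ (it would assert $\log(1-x^2)\ge -x^2$), and correspondingly the claim $\mathbb E Q \le \Gamma_1$ fails in the corner case $k=1$, $|A_1|=|A_2|=1$, $A_3=\emptyset$, where $\mathbb E Q = 1/c$ strictly exceeds $\Gamma_1 = c\left(1-e^{-1/c^2}\right)$. So the step you identified as the crux genuinely is the crux: completing the argument requires proving the coupled logarithmic inequality under a side condition excluding such corner cases (to leading order it holds once $a+b-2$ dominates $abx$), which neither your relaxation nor, strictly speaking, the paper's unproved scalar inequality achieves as stated.
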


We prove Theorem \ref{thm:main} in subsection \ref{sec:main_proof}.  For now, we prove Theorem \ref{thm:bf} which is in fact a corollary of this main result.

\begin{proof}[Proof of Theorem \ref{thm:bf}]
We can see that there exist $C_0,C_1$ such that for any $\delta \ge 0$,
\[
\left( \frac{e^\delta}{(1 + \delta)^{(1 + \delta)}}  \right)^{\mathbb E Q} \le C_1 e^{- C_0 \delta \mathbb E Q}.
\]
Then we have that with probability $1 - \gamma$,
\[
Q \le \left( 1 + \frac{1}{C_0} \log \frac{C_1}{\gamma} \right) \mathbb E Q \le \left( 1 + \frac{1}{C_0} \log \frac{C_1}{\gamma} \right) \Gamma_1.
\]
Note that because $1 - e^{-x} \le x$,
\[
\Gamma_1 \le k^2 \frac{|N(x) \triangle N(y)|^2}{4c} + \frac{c k |N(x) \cap N(y)|}{c-1}.
\]
Moreover, for $\delta \in (0,1)$,
\[
\left( \frac{e^{-\delta}}{(1 - \delta)^{(1 - \delta)}}  \right)^{\mathbb E Q} \le e^{- \frac{\delta^2}{3} \mathbb E Q} \le e^{- \frac{\delta^2}{3} \Gamma_0}.
\]
Hence, 
\[
\mathbb P \left\{ Q \le (1 - \delta) \Gamma_0 \right\} \le  e^{- \frac{\delta^2}{3} \Gamma_0}.
\]
Suppose that for some $\alpha \in (0,1)$, $\alpha c > k |N(x) \cap N(y)|$ then we have that 
\[
\Gamma_0 \ge \frac{(1 - e^{-\alpha})}{\alpha} k |N(x) \cap N(y)|.
\]
The function $(1 - e^{-\alpha}) / \alpha$ is decreasing and the limit as $\alpha \rightarrow 0$ is $1$.
Thus, for any $\delta \in (0,1)$, there exists an $\alpha \ge 0$ such that if $\alpha c > k |N(x) \cap N(y)|$ then $\Gamma_0 \ge (1 - \delta) k |N(x) \cap N(y)|$.
If this is the case then
\[
\mathbb P \left\{ Q \le (1 - \delta) k |N(x) \cap N(y)| \right\} \le  e^{- \frac 13 (1 - \delta) \delta^2 k |N(x) \cap N(y)|}.
\]
\end{proof}

\subsubsection{Negative Associativity of Bloom Filters}

First, let us go over the definition of negative associativity.
Random variables, $\{q_i\}_{i=1}^c$, are negatively associative (NA), if for any functions $f,g$, both monotonically increasing or decreasing, and disjoint sets $I,J \subset \{1,\ldots,c\}$,
\[
\mathbb E [f(q_I) g(q_J)] \le \mathbb E[f(q_I)] \cdot \mathbb E[g(q_J)],
\]
where $q_I,q_J$ are the variables restricted to these sets.

\begin{lemma}
\label{lem:NA}
\noindent {\bf (1)} Let $\{ q_{0,i} \}_{i = 1}^c, \{ q_{1,i} \}_{i = 1}^c \subset \{0,1\}^c$ be two independent random bitarrays that are both NA.  Then $q_0 | q_1$, the elementwise `or' operation, and $q_0 \& q_1$, the elementwise `and' operation, are both NA.  So NA is closed under elementwise `or' and `and' operations.

\noindent {\bf (2)} Let $q_i$ be the $i$th bit in any Bloom filter of the set $\mathcal N$ with independent hash functions, then the random bits, $\{q_i\}_{i=1}^c$, are NA.
\end{lemma}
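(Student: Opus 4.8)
The plan is to reduce both parts to two standard closure properties of negative association (NA), both of which can be found in \cite{joag1983negative,dubhashi1998balls}: (i) the union of two mutually independent NA families is again NA; and (ii) if $\{X_i\}_{i\in S}$ is NA and $g_1,\ldots,g_\ell$ are monotone functions (all nondecreasing, or all nonincreasing), where $g_\ell$ depends only on the variables indexed by a block $S_\ell$ and the blocks $S_\ell$ are pairwise disjoint, then $\{g_\ell(X_{S_\ell})\}_\ell$ is NA. Neither part requires anything beyond invoking these two facts together with an explicit check that the relevant functions are monotone in the correct direction.

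For part (1), I would first form the combined collection of $2c$ bits $\{q_{0,i}\}_{i=1}^c \cup \{q_{1,i}\}_{i=1}^c$. Since $q_0$ and $q_1$ are each NA and the two arrays are mutually independent, property (i) shows this combined $2c$-variable family is NA. Next observe that the elementwise `or' $r_i := q_{0,i}\vee q_{1,i}$ is a nondecreasing function of the single pair $(q_{0,i},q_{1,i})$, and these pairs are indexed by the pairwise disjoint blocks $\{(0,i),(1,i)\}$ as $i$ ranges over $1,\ldots,c$. Applying property (ii) with these blocks and the monotone maps $(a,b)\mapsto a\vee b$ yields that $\{r_i\}_{i=1}^c$ is NA. The elementwise `and' is handled identically, since $(a,b)\mapsto a\wedge b$ is also nondecreasing on $\{0,1\}^2$; this establishes closure under both operations.

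For part (2), the key is to exhibit the Bloom-filter bits as a monotone function of a balls-in-bins occupancy vector. With independent hash functions, inserting the elements of $\mathcal N$ amounts to throwing $m = k|\mathcal N|$ balls independently and uniformly into the $c$ bit-positions; let $N_i$ be the number of balls landing in position $i$. The occupancy vector $(N_1,\ldots,N_c)$ is multinomial, and multinomial occupancy counts are a classical example of an NA family (the balls-in-bins result of \cite{dubhashi1998balls,joag1983negative}). Since the $i$th Bloom-filter bit is exactly $q_i = \mathbf 1\{N_i \ge 1\}$, a nondecreasing function of $N_i$ alone, property (ii) applied with the singleton blocks $\{i\}$ gives that $\{q_i\}_{i=1}^c$ is NA.

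The individual steps are short, so the main thing to get right is invoking the closure properties with their precise hypotheses — in particular, verifying that the blocks are genuinely disjoint and that $\vee$, $\wedge$, and the occupancy indicator are monotone in the same direction — and citing the multinomial NA fact rather than reproving it. The one modelling point that deserves a sentence is that the stated independence of the hash functions across all elements of $\mathcal N$ is exactly what licenses the i.i.d.\ uniform balls-in-bins description, so that the multinomial NA result applies verbatim.
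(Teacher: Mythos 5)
Your proposal is correct, and part (1) is essentially the paper's own argument: form the $2c$-variable family, invoke closure of NA under independent unions (Property P7 of \cite{joag1983negative}), and then apply the nondecreasing maps $(a,b)\mapsto a\vee b$ and $(a,b)\mapsto a\wedge b$ on the disjoint pairs $\{(q_{0,i},q_{1,i})\}_{i=1}^c$ via closure under monotone functions on disjoint index sets (Property P6). Part (2), however, takes a genuinely different route. The paper decomposes the Bloom filter as $B = |_{j=1}^k\, |_{v \in \mathcal N}\, B^j_v$, where each single-insertion bitarray $B^j_v$ has exactly one 1-bit; it gets NA of each $B^j_v$ from the zero-one lemma for binary variables with $\sum_i B^j_{v,i}=1$ (Lemma 8 of \cite{dubhashi1998balls}) and then bootstraps to the full filter by repeatedly applying part (1). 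You instead pass to the occupancy representation: the $k|\mathcal N|$ independent uniform hash evaluations are i.i.d.\ balls in $c$ bins, the multinomial occupancy vector $(N_1,\ldots,N_c)$ is NA by the classical balls-in-bins result, and $q_i = \mathbf 1\{N_i \ge 1\}$ is a nondecreasing function of the singleton block $\{N_i\}$, so P6 finishes. Both are valid under the paper's idealized hash model (independent, uniform hash values across all element--hash pairs, which you correctly flag as the hypothesis that licenses the balls-in-bins picture). The trade-off is mainly in what is cited versus derived: the paper builds NA of the bits from the most primitive ingredient (the zero-one lemma) plus its own part (1), making part (2) logically dependent on part (1), whereas your argument is independent of part (1) but leans on the stronger black-box fact that multinomial occupancy counts are NA --- a fact which, in \cite{dubhashi1998balls}, is itself proved by essentially the paper's route. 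Your version is slightly more economical and also yields NA of the counts $N_i$ themselves, which is more information than the lemma asks for; the paper's version keeps the citation burden minimal and reuses its own machinery.
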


\begin{proof} {\bf (1)} 
We show that NA is closed under both elementwise operations.
First, note that the concatenation $\{q_{0,1}, \ldots, q_{0,c}, q_{1,1},\ldots, q_{1,c}$ is NA, by closure of NA under independent union (Property P7 in \cite{joag1983negative}).
Then on the disjoint sets, $\{ q_{0,i}, q_{1,i} \}_{i=1}^c$, apply the bit operation to produce the resulting array.
Operation `or' is monotonically increasing because, $q_{0,i} | q_{1,i} = 1\{ q_{0,i} + q_{1,i} > 0 \}$, `and' is as well because $q_{0,i} \& q_{1,i} = 1\{ q_{0,i} + q_{1,i} > 1 \}$.
Finally we conclude by closure of NA under monotonic increasing functions on disjoint sets (Property P6 in \cite{joag1983negative}).

{\bf (2)}
Consider hash function $j \in \{1,\ldots,k\}$ for node $v \in \mathcal N$.
Let $B^j_v$ be the $c$-bit Bloom filter bitarray for this vertex and hash function only, then $B^j_v$ has only a single bit that is $1$ and the rest are $0$.
By the 0-1 property for binary bits, we know that $B^j_v$ has NA entries (Lemma 8 in \cite{dubhashi1998balls}), since $\sum_i B^j_{v,i} = 1$.
Then the Bloom filter, $B$, of $\mathcal N$ is $B = |_{j = 1}^k |_{v \in N(x)} B^j_v$---the `or' operation applied to all hashes and vertices, and we conclude by property (1).
\end{proof}

\subsubsection{Proof of Theorem \ref{thm:main}}
\label{sec:main_proof}

Consider the partition of $N(x) \cup N(y)$ into $A_1 = N(x) \backslash N(y)$, $A_2 = N(y) \backslash N(x)$, $A_3 = N(x) \cap N(y)$.
Let $B_x, B_y$ be the Bloom filter bitarrays for $N(x), N(y)$ and let $B_1,B_2,B_3$ be those for $A_1, A_2,A_3$ respectively.

Notice that $B_x \& B_y = B_3 | (B_1 \& B_2)$, where the bit operations are elementwise.  If all hash functions are independent, then $B_1, B_2, B_3$ are independent.
Notice that for a given node and hash function the bit selected is random, but unique, which means that the elements of the bitarrays are not necessarily independent for any Bloom filter.
However, the bitarray $B_3 | (B_1 \& B_2)$ is negatively associative by Lemma \ref{lem:NA}.
Let $q_i = (B_{1,i} \& B_{2,i}) | B_{3,i}$, then we have that,
\[
\mathbb E q_i = 1 - (1 - \mathbb E [B_{1,i}] \cdot\mathbb E [B_{2,i}]) (1 - \mathbb E [B_{3,i}]).
\]
The probability that bit $i$ in one of the bitarrays is $0$ is 
\[
1 - \mathbb E [B_{j,i}] = \left( 1 - \frac 1c \right)^{k |A_j|}, \quad j=1,2,3.
\]
This can give us an expression in terms of $c,k,|A_1|,|A_2|,|A_3|$ for the expectation of $Q = \sum_{i=1}^c q_i$.
We have that by Hoeffding's inequality for negatively associative random variables \cite{dubhashi1998balls}, 
\[
\mathbb P \left\{ Q \ge (1 + \delta) \mathbb E Q \right\} \le \left( \frac{e^\delta}{(1 + \delta)^{(1 + \delta)}}  \right)^{\mathbb E Q},
\]
\[
\mathbb P \left\{ Q \le (1 - \delta) \mathbb E Q \right\} \le \left( \frac{e^{-\delta}}{(1 - \delta)^{(1 - \delta)}}  \right)^{\mathbb E Q}.
\]

It remains to provide intelligible bounds on $\mathbb E Q$.
By the inequalities $1 - 1/x \le \log x \le x - 1$,
\[
- \frac{k |A_3|}{c-1} \le \log \left( 1 - \mathbb E [B_{3,i}] \right) \le - \frac{k |A_3|}{c}
\]
Also,
\[
\mathbb E [B_{1,i}] \cdot \mathbb E[B_{2,i}] = \left(1 - \left( 1 - \frac 1c \right)^{k |A_1|} \right) \left(1 - \left( 1 - \frac 1c \right)^{k |A_2|} \right)
\]
so by the inequality,
\[
\log (1 - (1 - (1 - x)^a) (1 - (1 - x)^b)) \ge - ab x^2, \quad a,b > 0, x \in [0,1];
\]
we have that
\[
- k^2 \frac{|A_1| |A_2|}{c^2} \le \log \left( 1 - \mathbb E [B_{1,i}] \cdot \mathbb E[B_{2,i}] \right) \le 0.
\]
Furthermore, notice that the LHS is minimized when $|A_1| = |A_2| = |N(x) \triangle N(y)| / 2$,
\[
- k^2 \frac{|A_1| |A_2|}{c^2} \ge - k^2 \frac{|N(x) \triangle N(y)|^2}{4c^2}.
\]
We then have that 
\begin{align*}
&\log(c - \mathbb E Q) = \log c + \log \left( 1 - \mathbb E [B_{1,i}] \cdot\mathbb E [B_{2,i}] \right) + \log \left( 1 - \mathbb E [B_{3,i}] \right) \\
&\quad \le \log c - \frac{k |N(x) \cap N(y)|}{c}
\end{align*}
and
\[
\log(c - \mathbb E Q) \ge \log c - k^2 \frac{|N(x) \triangle N(y)|^2}{4c^2} - \frac{k |N(x) \cap N(y)|}{c-1}.
\]

\begin{algorithm}[H]
\caption{A Standard Bloom Filter}
\label{alg:bloom-filter}
  \begin{compactitem}[leftmargin=*]
  \item[] {\bf class} {\tt BloomFilter:}
    \begin{compactitem}[leftmargin=*]
      \item[] {\bf def} $\mathtt{constructor}(\texttt{self}, c, \cbr{\mathtt{h_t}(\cdot): t = 1,\ldots,k})$:
        \begin{compactitem}[leftmargin=*]
          \item[] $\mathtt{self.b}[i] = 0\quad \forall i=1,\ldots,c$
          \item[] $\mathtt{self.h_{t} = h_{t}}\quad \forall i = 1,\ldots,k$
        \end{compactitem}
      \item[]
      \item[] {\bf def} $\mathtt{add(self, x)}$:
        \begin{compactitem}[leftmargin=*]
          \item[] $\mathtt{self.b}[\mathtt{self.h_t(x)}] = 1\quad\forall t = 1,\ldots,k$
        \end{compactitem}
      \item[]
      \item[] {\bf def} $\mathtt{union(self, bf)}$:
        \begin{compactitem}[leftmargin=*]
          \item[] $\mathtt{self.b}[i] \leftarrow \mathtt{self.b}[i] \mid \mathtt{bf}.\mathtt{b}[i]\quad \forall i=1,\ldots,c$
        \end{compactitem}
      \item[]
      \item[] {\bf def} $\mathtt{size(self)}$:
        \begin{compactitem}[leftmargin=*]
        \item[] {\bf return} $\ceil{-\frac{c}{k} \log\rbr{1 - \frac{\mathtt{nnz(self.b)}}{c}}}$
        \end{compactitem}
    \end{compactitem}
  \end{compactitem}
\end{algorithm}

\subsection{Co-Factorization with Graph Information}

Co-Factorization of Rating and Graph Information (Co-Factor) \cite{singh2008relational, liang2016factorization} is ideologically very different from GRMF and GRWMF, because it does not use graph information as regularization term. Instead it treats the graph adjacency matrix as another rating matrix, sharing one-sided latent factors with the original rating matrix. Co-Factor minimizes the following objective function:
\begin{align}
\min_{U,V} \sum_{(i,j) \in \Omega_R} \left(R_{i,j} - u_i^\top v_j \right)^2 &+ \frac{\lambda}{2} (\|U\|_F^2 + \|V\|_F^2 + \|V'\|_F^2) + \sum_{(i,j) \in \Omega_G} \left(G_{i,j} - u_i^\top v'_j \right)^2,
\label{eq:co-factor}\end{align}
where $U \in \dR^{n \times r}, V \in \dR^{m \times r}, V' \in \dR^{n \times r}$.
We can extend Co-Factor to incorporate our DNA-d 
by replacing $G$ with $B$ in \eqref{eq:co-factor}, where $B\in \dR^{n \times c}$ is the Bloom filter bipartite graph adjacency matrix of $n$ real-user nodes and $c$ pseudo-user nodes, similar to $B$ as in \eqref{eq:G1}. We call the extension Co-Factor\_DNA-$d$.


\subsection{Graph Convolutional Matrix Completion}
Graph Convolutional Matrix Completion (GC-MC) is a graph convolutional network (GCN) based geometric matrix completion method \cite{berg2017graph}. In \cite{berg2017graph}, the side information graph is represented as the adjacency matrix of the side graph and these one-hot encodings are treated as features for nodes in the graph. Convolutions of these features are performed on the bipartite rating graph. We find in our experiments that using these one-hot encodings of the graph as feature is an inferior choice both in terms of performance and speed. To capture higher order side graph information, it is better to use $G + \alpha G^2$ for some constant $\alpha$ and this alternate choice usually gives smaller generalization error than the original GC-MC method. However, it is hard to explicitly calculate $G + \alpha G^2$ and store the entire matrix for a large graph for the same reason described in Section~\ref{sec:grmf}. Again, we can use graph DNA to efficiently encode and store the higher order information before feeding it into GC-MC. We show in our experiments that this outperforms current state-of-the-art GCN methods \cite{berg2017graph, monti2017geometric}. 


\begin{figure*}
\begin{tabular}{cc}
\hspace{-8pt}
\includegraphics[width=0.4\linewidth]{speed_douban_v1.png} &
\includegraphics[width=0.4\linewidth]{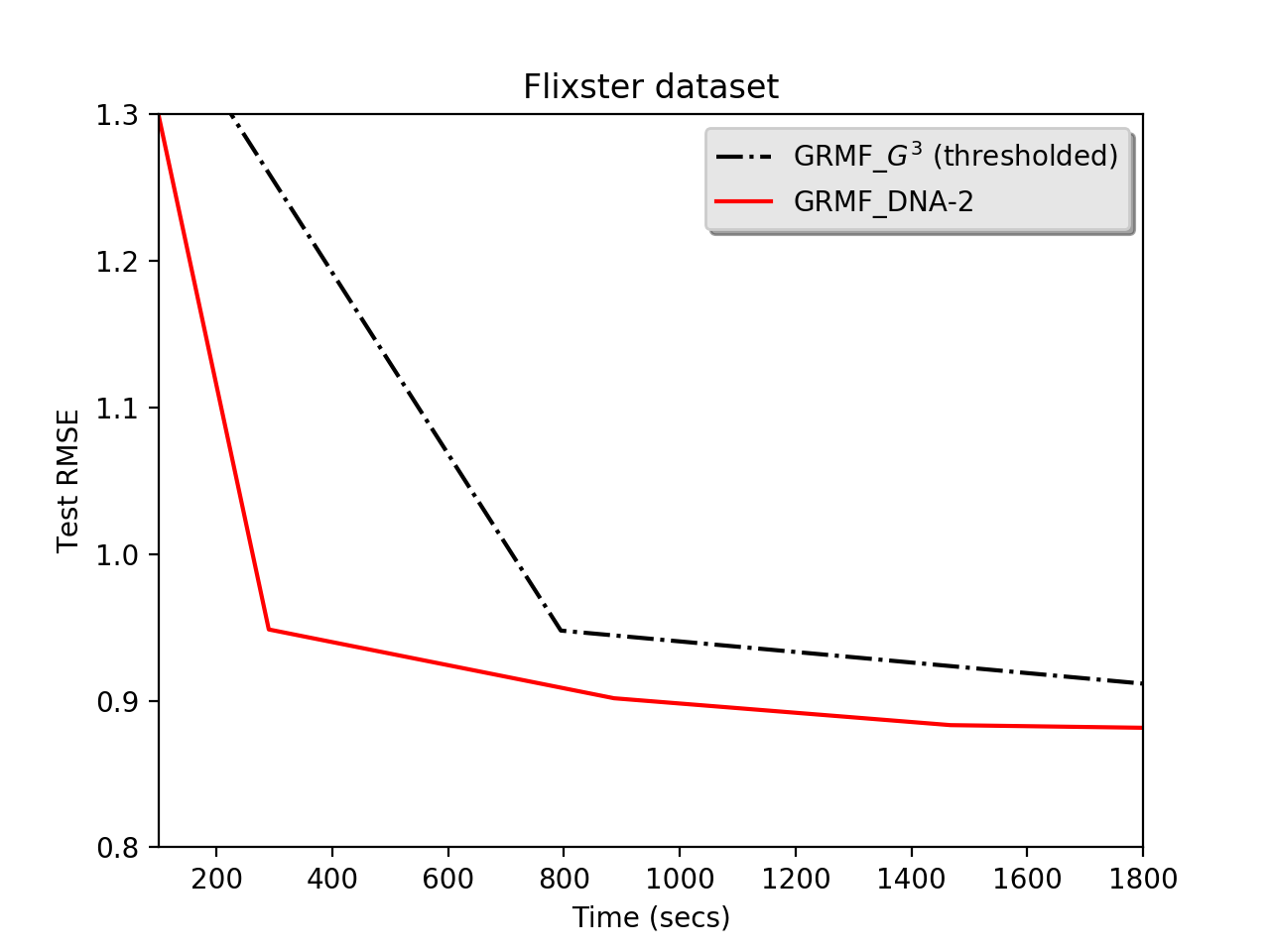} 
\end{tabular}
\caption{Compare Training Speed of GRMF, with and without Graph DNA. }
\label{fig:speed2}
\vspace{-10pt}\end{figure*}

\subsection{Simulation Study}
In the simulation we carried out, we set the number of users $n = 10,000$ and the number of items $m = 2,000$. We uniformly sample $5\%$ for training and $2\%$ for testing out of the total $nm$ ratings. We choose $T = 3$ so the graph contains at most $6$-hop information among $n$ users. We use rank $r = 50$ for both user and item embeddings. We set influence weight $w = 0.6$, i.e. in each propagation step, $60\%$ of one user's preference is decided by its friends (i.e. neighbors in the friendship graph). We set $p = 0.001$, which is the probability for each of the possible edges being chosen in Erd$\tilde{o}$s-R$\acute{e}$nyi graph $G$. A small edge probability $p$, influence weight $w < 1.0$, and a not too-large $T$ is needed, because we don't want that all users become more or less the same after $T$ propagation steps.

\subsection{Metrics}\label{sec:metric}

We omit the definitions of RMSE, Precision@$k$, NDCG@$k$, MAP as those can be easily found online.
HLU: Half-Life Utility \cite{breese1998empirical, shani2008mining} is defined as:
        \begin{equation}
            \text{HLU} = \frac{1}{n}\sum_{i = 1}^n \text{HLU}_i,
        \end{equation} where $n$ is the number of users and $\text{HLU}_i$ is given by:
        \begin{equation}
            \text{HLU}_i = \sum_{l = 1}^{k} \frac{ max(R_{i\Pi_{il}} - d, 0)}{2^{(j - 1) / (\alpha - 1)}},
        \end{equation}
        where $R_{i\Pi_{il}}$ follows previous definition, $d$ is the neural vote (usually the rating average), and $\alpha$ is the viewing halflife. The halflife is the number of the item on the list such that there is a $50$-$50$ chance the user will review that item~\cite{breese1998empirical}.

\begin{algorithm}
\caption{Simulation of Synthesis Data}
\label{alg:sim}
\begin{algorithmic}[1]
\Require $n$ users, $m$ items, rank $r$, influence weight $w$, $T$ propagation steps
\Ensure $R_{\text{tr}} \in \dR^{n \times m}$, $R_{\text{te}} \in\dR^{n \times m}$, $G \in\dR^{n \times n}$
\State Randomly initialize $U \in \dR^{n \times r}, V \in \dR^{m \times r}$ from standard normal distribution
\State Generate a random undirected Erd$\tilde{o}$s-R$\acute{e}$nyi graph $G$ with each edge being chosen with probability $p$
\For{$t = 1, ..., T$}
  \For{$i = 1, ..., n$}
      \State $\tilde{U}_i = w \cdot \sum_{j: (i, j) \in G}U_j + (1 - w) \cdot U_i$
  \EndFor
  \State Set $U = \tilde{U}$
\EndFor
\State Generate rating matrix $R = U V^T$
\State Random sample observed user/item indices in training and test data: $\Omega_{\text{tr}}, \Omega_{\text{te}}$
\State Obtain $R_{\text{tr}} = \Omega_{\text{tr}} \circ R, R_{\text{te}} = \Omega_{\text{te}} \circ R$
\State \textbf{return} rating matrices $R_{\text{tr}}, R_{\text{te}}$, user graph $G$
\end{algorithmic}
\end{algorithm}

\begin{table}
  \caption{Compare Bloom filters of different depths and sizes an on Synthesis Dataset. Note that the number of bits of Bloom filter is decided by Bloom filter's maximum capacity and tolerable error rate (i.e. false positive error, we use $0.2$ as default).}
  \vskip -0.1in
  \label{tab:sim_bits}
  \resizebox{0.8\textwidth}{!}{
  \begin{tabular}{lcccccccc}
    \toprule
    methods & max capacity & $c$ bits & nnz ratio & RMSE ($ \times 10^{-3}$) & \% Relative Graph Gain  \\
    \midrule
    GRMF\_$G^2$ & -  & - & -   & 2.6543  & 59.5903\\
    GRMF\_DNA-1 & 20 & 135 & 0.217  & 2.4303  & 163.8734 \\
    GRMF\_DNA-1 & 50 & 336 & 0.093  &  2.4795 &  140.9683 \\
    GRMF\_DNA-2 & 20 & 135 & 0.880 & 2.4921  & 135.1024 \\
    GRMF\_DNA-2 & 50 & 336  & 0.608  & 2.4937  & 134.3575 \\
    GRMF\_DNA-2 & 100 & 672  & 0.381  & 2.4510  & 154.2365 \\
    GRMF\_DNA-2 & 200 & 1,341 & 0.215  & 2.4541  & 152.7933  \\
    GRMF\_DNA-3 & 200 & 1,341 & 0.874  & 2.4667  & 146.9274 \\
    GRMF\_DNA-3 & 600 & 4,020 & 0.525  & 2.4572  & 151.3500 \\
    GRMF\_DNA-3 & 1,000 & 6,702 & 0.364  & 2.4392  & 159.7299 \\
    GRMF\_DNA-3 & \bfseries{1,500} & \bfseries{10,050}& \bfseries{0.262}  & \bfseries{2.4247}  & \bfseries{166.4804} \\
    GRMF\_DNA-4 & 2,000 & 13,401& 0.743 & 2.5532 &  106.6573 \\
    GRMF\_DNA-4 & 4,000 & 26,799& 0.499  & 2.4466 & 156.2849 \\
  \bottomrule
\end{tabular}
}
\end{table}

\begin{table*}
    \caption{Compare nnz of different methods on Douban and Flixster datasets. GRMF\_$G^4$ and GRMF\_DNA-2 are using the same $4$-hop information in the graph but in different ways. Note that we do not exclude potential overlapping among columns.}
    \vskip -0.1in
    \label{tab:nnz}
     \resizebox{1.0\textwidth}{!}{
    \begin{tabular}{lllccccccr}
    \toprule
    Dataset & methods & $R_{\text{tr}}$ & $G$ & $G^2$ & $G^3$ & $G^4$ & $B$ & total nnz \\
    \midrule
    \multirow{8}{*}{Douban}
    &MF   & 9,803,098 &  -  & -   & -     & - & -    & 9,803,098  \\
    &GRMF\_$G$   & 9,803,098 & 1,711,780 & -   & -  & -   & -     & 11,514,878  \\
    &GRMF\_$G^2$  & 9,803,098 & 1,711,780 & 106,767,776   & -  & -   & -     & 118,282,654  \\
    &GRMF\_$G^3$ & 9,803,098 & 1,711,780 & 106,767,776   & 2,313,572,544     & -   & -  & 2,431,855,198  \\
    &GRMF\_$G^4$ & \bfseries{9,803,098} & \bfseries{1,711,780} & \bfseries{106,767,776}   & \bfseries{2,313,572,544}     & \bfseries{8,720,553,105}    & - &  \bfseries{11,152,408,303} \\
    &GRMF\_DNA-1  & 9,803,098 &  0  & -   & -   & -  &    8,834,740  &  18,637,838 \\
    &GRMF\_DNA-2  & 9,803,098 &  1,711,780  & -   & -   & -  &   142,897,900   &  154,412,778 \\
    &GRMF\_DNA-3  & 9,803,098 &  1,711,780  & -   & -  & -   &   928,159,604   &  939,674,482 \\
    \midrule
    \multirow{8}{*}{Flixster}
    &MF   & 3,619,304 &  -  & -   & -     & -  & -   &  3,619,304 \\
    &GRMF\_$G$   & 3,619,304 & 2,538,746 & -   & -     & -  & -   &  6,158,050 \\
    &GRMF\_$G^2$  & 3,619,304 &2,538,746 & 130,303,379   & -     & -  & -   & 136,461,429  \\
    &GRMF\_$G^3$ & 3,619,304 & 2,538,746 &130,303,379   & 2,793,542,551     & -  & -   & 3,060,307,359   \\
    &GRMF\_$G^4$ & \bfseries{3,619,304} & \bfseries{2,538,746} & \bfseries{130,303,379}   & \bfseries{2,793,542,551}      & \bfseries{12,691,844,513}   & - &  \bfseries{15,752,151,872} \\
    &GRMF\_DNA-1  & 3,619,304 &  0  & -   & -   & -  &  12,664,952    &  16,284,256 \\
    &GRMF\_DNA-2  & 3,619,304 &  2,538,746  & -   & -   & -  &   181,892,883   &  188,050,933 \\
    &GRMF\_DNA-3   & 3,619,304 &  2,538,746  & -   & -  & -   &   1,185,535,529   & 1,191,693,579  \\
    \bottomrule
\end{tabular}
}
\vskip -0.15in
\end{table*}

\begin{table}
  \caption{Compare GRMF Methods of different ranks for Explicit Feedback on Flixster Dataset.}
  \label{tab:rank}
  \vskip -0.15in
  \resizebox{0.65\textwidth}{!}{
  \begin{tabular}{llccc}
    \toprule
    Rank & methods & test RMSE ($ \times 10^{-1}$)  & \% gain   \\
    \midrule
    \multirow{2}{*}{10}
    &GRMF\_$G^2$                    & 8.7849    & -     \\
    &GRMF\_DNA-3  & \bfseries{8.7383}  & \bfseries{0.8262}\\
    \midrule
    \multirow{2}{*}{20}
    &GRMF\_$G^2$                    & 8.9179   & -     \\
    &GRMF\_DNA-3  & \bfseries{8.7565}  & \bfseries{1.8098}\\
    \midrule
    \multirow{2}{*}{30}
    &GRMF\_$G^2$                   & 9.0865    & -     \\
    &GRMF\_DNA-3  & \bfseries{8.9255}    & \bfseries{1.7719}\\
  \bottomrule
\end{tabular}
}
\vskip -0.15in
\end{table}

\subsection{Graph Regularized Weighted Matrix Factorization for Implicit feedback}

We use the rank $r = 10$, negatives' weight $\rho = 0.01$ and measure the prediction performance with metrics MAP, HLU, Precision@$k$ and NDCG$@k$ (see definitions of metrics in Appendix~\ref{sec:metric}).

We follow the similar procedure to what is done before in GRMF and co-factor: we run all combinations of tuning parameters of $\lambda_l \in \{0.01, 0.1, 1, 10, 100\}$ and $\lambda_g \in \{0.01, 0.1, 1, 10, 100\}$ for each method on validation data for fixed number $40$ epochs and choose the best combination as the parameters to use on test data. We then report the best prediction results during first $40$ epochs on test data with the chosen parameter combination.

\subsection{Explore effects of rank}\label{rank}
Next we investigate whether the proposed DNA coding can achieve consistent improvements when varying the rank in the GRMF algorithm.
In Table~\ref{tab:rank}, we compare the proposed GRMF\_DNA-3 with GRMF\_$G^2$, which achieves the best RMSE without using DNA coding in the previous tables.
The results clearly show that the improvement of the proposed DNA coding is consistent over different ranks and works even better when rank is larger.

\subsection{Reproducibility}
\label{sec:reproduce}
To reproduce results reported in the paper, one need to download data (douban and flixster) and third-party C++ Matrix Factorization library from the link \url{https://www.csie.ntu.edu.tw/~cjlin/papers/ocmf-side/}. One can simply follow README there to compile the codes in Matlab and run one-class matrix factorization library in different modes (both explicit feedback and implicit feedback works). The advantage of using this library is that the codes support multi-threading and runs quite fast with very efficient memory space allocations. It also supports with graph or other side information. All three methods' baseline can be simply run with the tuning parameters we reported in the Table~\ref{tab:repro-2}, ~\ref{tab:repro-3}, ~\ref{tab:repro-4} in Appendix.

To reproduce results of our DNA methods, one need to generate Bloom filter matrix $B$ following Algorithm~\ref{alg:graph-bloom}. 
We will provide our python codes implementing Algorithm~\ref{alg:graph-bloom} and Matlab codes converting into the formats the library requires.

For baselines and our DNA methods, We perform a parameter sweep for $\lambda_l \in \{0.01, 0.1, 1, 10, 100\}$ and $\lambda_g \in \{0.01, 0.1, 1, 10, 100\}$ as well as for $\alpha \in \{0.0001, 0.001, 0.01, 0.1, 0.3, 0.7, 1\}$, for $\beta \in \{0.005, 0.01, 0.03, 0.05, 0.1 \}$ when needed. We run all combinations of tuning parameters for each method on validation set for $40$ epochs and choose the best combination as the parameters to use on test data. We then report the best test RMSE in first $40$ epochs on test data with the chosen parameter combination. 
We provide all the chosen combinations of tuning parameters that achieves reported optimal results in results tables in the Table~\ref{tab:repro-2}, ~\ref{tab:repro-3}, ~\ref{tab:repro-4} in Appendix. 
One just need to exactly follow our procedures in Section~\ref{sec:app} to construct new $\dot{G}, \dot{U}$ to replace the $G, U$ in baseline methods before feeding into Matlab.

As to simulation study, we will also provide python codes to repeat our Algorithm~\ref{alg:sim} to generate synthesis dataset. One can easily simulate the data before converting into Matlab data format and running the codes as before. The optimal parameters can be found in Table~\ref{tab:repro-1}. For all the methods, we select the best parameters $\lambda_l$ and $\lambda_g$ from $\{0.01, 0.1, 1, 10, 100\}$. For method GRMF\_$G^2$, we  tune an additional parameter $\alpha\in \{0.0001, 0.001, 0.01, 0.1, 0.3, 0.7, 1\}$.
For the thrid-order method 
GRMF\_$G^3$, we tune $\beta\in \{0.005, 0.01, 0.03, 0.05, 0.1 \}$ in addition to $\lambda_l, \lambda_G, \alpha$.
Due to the speed constraint, we are not able to tune a broader range of choices for $\alpha$ and $\beta$ as it is too time-consuming to do so especially for douban and flixster datasets. For example, it takes takes about 3 weeks using 16-cores CPU to tune both $\alpha, \beta$ on flixster dataset. We run each method with every possible parameter combination for fixed $80$ epochs on the same training data, tune the best parameter combination based on a small predefined validation data and report the best RMSE results on test data with the best tuning parameters during the first $80$ epochs. Note that only on the small synthesis dataset, we calculate full $G^3$ and report the results. On real datasets, there is no way to calculate full $G^4$ to utilize the complete $4$-hop information, because one can easily spot in Table~\ref{tab:nnz} the number of non-zero elements (nnz) is growing exponentially when the hop increases by $1$, which makes it impossible for one to utilize complete $3$-hop and $4$-hop information.

In Table~\ref{tab:repro-2}, one can compare magnitude of optimal $\alpha$ and $\beta$ to have a good idea of whether $G$ or $G^2$ is more useful. $G$ represents shallow graph information and $G^2$ represents deep graph information. If one already run GRMF\_$G^2$, one can then use this as a preliminary test to decide whether to go deep with
DNA-3 ($d = 3$) to capture deep graph information or simply go ahead with DNA-1 ($d = 1$) to fully utilize shallow information. 
For douban dataset, we have $\alpha = 0.05 >  0.0005 = \beta$, which implies shallow information is important and we should fully utilize it. It explains why DNA-1 is performing well both in terms of performance and speed on douban dataset. It is worth noting that 
GRMF\_DNA-1's Bloom filter matrix $B$ contains much more nnz than that of $G$ in Table~\ref{tab:nnz} though $20\%$ less than that of $G^2$. On the other hand, for flixster dataset, we have $\alpha = 0.01 <  0.1 = \beta$, which implies in this dataset deeper information is more important and we should go deeper. That explains why here 
GRMF\_DNA-3 ($6$-hop) achieves about $10$ times more gain than using $1$-hop 
GRMF\_$G$.  

\begin{table*}
  \caption{Compare Matrix Factorization for Explicit Feedback on Synthesis Dataset. The synthesis dataset has $10,000$ users and $2,000$ items with user friendship graph of size $10,000 \times 10,000$. Note that the graph only contains at most $6$-hop valid information. GRMF\_$G^6$ means GRMF with $G + \alpha \cdot G^2 + \beta \cdot G^3 + \gamma \cdot G^4 + \epsilon \cdot G^5 + \omega \cdot G^6.$ GRMF\_DNA-$d$ means depth $d$ is used.}
  \label{tab:repro-1}
   \resizebox{0.8\textwidth}{!}{
  \begin{tabular}{lccccccccc}
    \toprule
    methods & test RMSE ($ \times 10^{-3}$) & $\lambda_l$ & $\lambda_g $ & $\alpha$ & $\beta$ & $\gamma$ & $\epsilon$ &$\omega$ &\% gain over baseline \\
    \midrule
    MF                  & 2.9971 & 0.01  & -    & -     & - & - & -  & - & -  \\
    GRMF\_$G$               & 2.7823 & 0.01  & 0.01 & -  & -  & - & -  & - & 7.16693\\
    GRMF\_$G^2$  & 2.6543 & 0.01  & 0.01 & 0.3   & - & - & - &-  & 11.43772\\
    GRMF\_$G^3$ & 2.5687 & 0.01 & 0.01 & 0.01 & 0.05 & - &- &  -  &14.29382\\
    GRMF\_$G^4$ & 2.5562 & 0.01 & 0.01 & 0.01 & 0.05 & 0.1& - &-  & 14.71088\\
    GRMF\_$G^5$  & 2.4853 & 0.01 & 0.01 & 0.01 & 0.05 & 0.1 & 0.1&-  & 17.07651\\
    GRMF\_$G^6$  & 2.4852 & 0.01 & 0.01 & 0.01 & 0.05 & 0.1 & 0.1& 0.01 &17.07984\\
    GRMF\_DNA-1   & 2.4303 & 0.01 & 0.01  & - & - & - & - & -  &18.91161 \\
    GRMF\_DNA-2  & 2.4510 & 0.01 & 0.01  & - & - & - & - & -  &18.22095 \\
    GRMF\_DNA-3  & \bfseries{2.4247} & \bfseries{0.01} & \bfseries{0.01}  & - & - & - &- & -  &\bfseries{19.09846}\\
    GRMF\_DNA-4  & 2.4466 & 0.01 & 0.01  & - & - & - & - & -  &18.36776\\

  \bottomrule
\end{tabular}
}
\end{table*}

\begin{table*}
  \caption{Compare Matrix Factorization methods for Explicit Feedback on Douban and Flixster data. We use rank $r = 10$.} 
  \label{tab:repro-2}
   \resizebox{0.8\textwidth}{!}{
  \begin{tabular}{lllcccccc}
    \toprule
    Dataset & methods & test RMSE ($ \times 10^{-1}$) & $\lambda_l$ & $\lambda_g $ & $\alpha$ & $\beta$ & \% gain over baseline \\
    \midrule
    \multirow{8}{*}{Douban}
    &MF              & 7.3107 & 1   & -   & -     & -  & -     \\
    &GRMF\_$G$                 & 7.2398 & 0.1 & 100 & -     & -  & 0.9698\\
    &GRMF\_$G^2$ & 7.2381 & 0.1 & 100 & 0.001 & -  & 0.9930\\
    &GRMF\_$G^3$ (full)  & 7.2432  & 0.1 & 100 & 0.05 & 0.0005 & 0.9350\\
    &GRMF\_$G^3$ (thresholded) & 7.2382 & 0.1 & 100 & 0.05 & 0.0005 & 0.9917\\
    &GRMF\_DNA-1    & 7.2191 & 0.1 & 100 & - & - & 1.2689 \\
    &GRMF\_DNA-2   & 7.2359 & 1 & 10 & - & - & 1.0232 \\
    &GRMF\_DNA-3  & \bfseries{7.2095} & \bfseries{0.01} & \bfseries{100} & - & - & \bfseries{1.3843}\\
    \midrule
    \multirow{8}{*}{Flixster}
    &MF                & 8.8111 & 0.1 & 1   & -    & -     & -      \\
    &GRMF\_$G$               & 8.8049 & 0.01 & 1    & -     & -  & 0.0704\\
    &GRMF\_$G^2$ & 8.7849 & 0.01 & 1  & 0.05 & -  & 0.2974\\
    &GRMF\_$G^3$  (full) & 8.7932 & 0.1 & 1 & 0.01 & 0.1 & 0.2032\\
    &GRMF\_$G^3$  (thresholded) & 8.7920 & 0.01 & 1 & 0.01 & 0.1 & 0.2168\\
    &GRMF\_DNA-1    & 8.8013  & 0.01 & 1   & - & - & 0.1112  \\
    &GRMF\_DNA-2   & 8.8007 & 0.1 & 1  & - & - & 0.1180 \\
    &GRMF\_DNA-3 & \bfseries{8.7453} & \bfseries{0.1} & \bfseries{100} & - & - & \bfseries{0.7468}\\
  \bottomrule
\end{tabular}
}
\end{table*}

\begin{table*}
  \caption{Compare Co-factor Methods for Explicit Feedback on Douban and Flixster Datasets. We use rank $r = 10$ for both methods.}
 \label{tab:repro-3}
  \resizebox{0.6\textwidth}{!}{
  \begin{tabular}{llcccc}
    \toprule
    Dataset & methods & test RMSE ($ \times 10^{-1}$) & $\lambda_l$  & \% gain over baseline  \\
    \midrule
    \multirow{2}{*}{Douban}
    & co-factor\_$G$                     & 7.2743 & 1   & -     \\
    & co-factor\_DNA-$3$ & \bfseries{7.2674} & \bfseries{1}   & \bfseries{0.5923}\\
    \midrule
    \multirow{2}{*}{Flixster}
    & co-factor\_$G$                  & 8.7957 & 0.01   & -     \\
    & co-factor\_DNA-$3$ & \bfseries{8.7354} & \bfseries{0.01}   & \bfseries{0.8591}\\
  \bottomrule
\end{tabular}
}
\end{table*}

\begin{table*}
  \caption{Compare Weighted Matrix Factorization with Graph for Implicit Feedback on Douban and Flixster Datasets. We use rank $r = 10$ for both methods and all metric results are in $\%$.}
  \vskip -0.1in
  \label{tab:repro-4}
  \resizebox{0.7\textwidth}{!}{
  \begin{tabular}{llccccccccc}
    \toprule
    Dataset & Methods & MAP & HLU & P@$1$ & P@$5$ & NDCG@$1$ & NDCG@$5$ & $\lambda_l$ & $\lambda_g $ \\
    \midrule
    \multirow{2}{*}{Douban}
    &WMF\_$G$               & 8.340 & 13.033 & 14.944 & 10.371 & 14.944 & 12.564 & 0.01 & 10    \\
    &WMF\_DNA-3 & \bfseries{8.400} & \bfseries{13.110} & \bfseries{14.991} & \bfseries{10.397} & \bfseries{14.991} & \bfseries{12.619} & \bfseries{1}    & \bfseries{1}     \\
     \midrule
    \multirow{2}{*}{Flixster}
    &WMF\_$G$              & 10.889 & 14.909 & 12.303 & 7.9927 & 12.303 & 12.734 & 10 & 0.1    \\
     &WMF\_DNA-3 & \bfseries{11.612} & \bfseries{15.687} & \bfseries{12.644} & \bfseries{8.1583} & \bfseries{12.644} & \bfseries{13.399} & \bfseries{1}    & \bfseries{1}     \\

  \bottomrule
\end{tabular}
}
\vskip -0.15in
\end{table*}

\subsection{Code}\label{sec:code}
We will make our code available on Github in the final version of the paper

\end{document}